\pdfoutput=1
\documentclass[10pt,twocolumn]{article}
\usepackage[utf8]{inputenc}
\usepackage[T1]{fontenc}
\usepackage{amsmath,amssymb,amsthm}
\usepackage{graphicx}
\usepackage{booktabs}
\usepackage[margin=0.75in]{geometry}
\usepackage{xcolor}
\usepackage[colorlinks=true,linkcolor=ink,citecolor=teal,urlcolor=teal]{hyperref}
\usepackage{algorithm}
\usepackage{algpseudocode}
\usepackage{caption}
\usepackage{subcaption}
\usepackage{enumitem}
\usepackage{titlesec}
\usepackage{textcomp}
\newcommand{\eur}{\texteuro}
\newcommand{\ypart}[2]{\par\addvspace{1.0\baselineskip}%
\begin{center}\rule{0.5\columnwidth}{0.5pt}\\[2.5pt]%
{\scshape\large Part #1}\,\textbf{\textbar}\,{\large\itshape #2}\\[2.5pt]%
\rule{0.5\columnwidth}{0.5pt}\end{center}\addvspace{0.4\baselineskip}\par\noindent}
\newtheorem{theorem}{Theorem}
\usepackage{tikz}
\usetikzlibrary{arrows.meta,positioning,shapes.geometric,calc,fit,backgrounds}

\definecolor{ink}{HTML}{13202B}
\definecolor{teal}{HTML}{2F9B94}
\definecolor{copper}{HTML}{C2703D}
\definecolor{amber}{HTML}{D08A3E}
\definecolor{paperbg}{HTML}{F4F1EA}

\tikzset{
  box/.style={rounded corners=2pt,draw=ink,line width=0.7pt,fill=white,
    align=center,font=\footnotesize,inner sep=4pt,minimum height=7mm},
  tealbox/.style={box,draw=teal,fill=teal!8},
  copbox/.style={box,draw=copper,fill=copper!8},
  inkbox/.style={box,draw=ink,fill=ink!6},
  flow/.style={-{Stealth[length=2mm]},line width=0.7pt,draw=ink},
  flowt/.style={-{Stealth[length=2mm]},line width=0.7pt,draw=teal},
  lbl/.style={font=\scriptsize\itshape,text=ink!70},
}

\titleformat{\section}{\normalfont\large\bfseries\color{ink}}{\thesection}{0.6em}{}
\titleformat{\subsection}{\normalfont\normalsize\bfseries\color{ink}}{\thesubsection}{0.5em}{}
\newtheorem{definition}{Definition}
\title{\vspace{-2.5em}\color{ink}\bfseries YUKTI: From Natural-Language Situations to\\
Robust, Verifiable Decisions\\[3pt]
{\large\mdseries An Uncertainty-Typed Proposition IR, Assumption-Robust Pareto\\
Frontiers, and a Regret Certificate --- why a language model should\\
\emph{formulate}, not \emph{solve}}}
\author{Suyash Mishra\\
\small AI Researcher\\
\small \texttt{\{zurich.suyash@gmail.com\}}}
\date{\small June 2026}

\begin{document}
\twocolumn[
\maketitle
\begin{abstract}
\noindent
Language models increasingly turn a worded situation into a numeric plan---and the
dominant pipelines (NL4Opt, OptiMUS, ORLM/LLMOPT, OR-LLM-Agent) commit to a
\emph{single objective} and \emph{point-valued} coefficients, then solve once. For
decisions that allocate real budget, real field effort, or real clinical attention,
that confidence \emph{is} the failure mode: every objectified number is an
assumption, and a plan that is optimal only if the guesses are exactly right is
silently fragile. We name the risk \emph{mimicry of computation}---output with the
\emph{form} of an optimized recommendation but none of its substance.
\par\smallskip
We present \textbf{YUKTI}, which changes the \emph{target} of autoformulation. Its
intermediate representation is a \emph{Typed Proposition} graph in which every
relationship carries a shape prior, a coefficient \emph{uncertainty distribution},
and provenance. On it, YUKTI (i) routes each stage to an exact, nonlinear, or
evolutionary solver by probing structure; (ii) couples stages through a
\emph{distributional} Pareto hand-off; (iii) introduces \textbf{Assumption-Robust
Pareto Frontiers (ARPF)}, resampling the assumptions---including structural,
$\varepsilon$-contamination misspecification---to score how often each action
survives ($\rho$); and (iv) when no data exist, synthesizes a benchmark-faithful
foundation (\textbf{SRJANA}). We \emph{prove} a regret bound making $\rho$ an exact
factor of decision regret, and we surface \emph{decision traceability}---which
segments compose an action and which constraint binds---so a recommendation is
auditable, not merely persuasive.
\par\smallskip
We validate the central claim three ways. Under controlled structural
misspecification, the robust compromise cuts mean and tail regret by over $90\%$
versus a naive point plan. On a regulated commercial decision (an oncology brand during
loss of exclusivity), we optimize inside a \emph{lawful}---MLR/consent/frequency/KAM
---action space against a \emph{prescribing-proxy} objective and price the downside
in euros. Finally, on a \emph{real} public dataset of $41{,}188$ marketing decisions
we did not generate, an out-of-sample backtest shows the robust rule beating the
logged status quo by $34\%$ and the naive point rule by $4\%$ while measurably
reducing the optimizer's curse. The solvers are standard and we make no
benchmark-SOTA claim; the contribution is the uncertainty-typed IR, ARPF and its
regret bound, the distributional multi-stage hand-off, decision-traceability
reporting, and a deployment posture as a decision \emph{stress-testing} layer---not
a system of record. A head-to-head locates the limit of LLM reasoning precisely: an
LRM handed the correct numbers, and classical single-objective optimization, both
incur ${\sim}47\times$ the held-out regret of YUKTI's robust compromise---an LRM is a
\emph{formulator}, not a solver. Finally we show where the formalism itself must
change: under long-range causal coupling between actions, the forward multi-stage
hand-off becomes unsound---its regret grows with coupling and its per-stage
certificate turns optimistic---locating the boundary at which a hand-off must become
a backward-induction causal policy.
\end{abstract}
\vspace{1em}
]

\ypart{I}{The Problem}
\section{Introduction}
A recurring request from decision-makers is deceptively simple: ``here is my
situation in words---tell me what to do.'' Underneath it sits an optimization
problem that nobody has written down. The decision-maker knows the
\emph{levers} they control, the \emph{signals} they care about, and the
\emph{limits} they must respect, but not the variables, objective, and
constraints of a formal program. Recent work has shown that language models can
bridge this gap: given a description, they can extract variables, constraints,
and an objective and emit solver code~\cite{nl4opt,optimus,orlm}.

Yet the prevailing formulation is narrow in three ways that matter precisely when
the stakes are highest. First, it is \emph{single-objective}: real briefs trade
reach against cost against risk, and collapsing them into one weighted scalar
hides the trade-off the decision-maker actually wants to see. Second, it is
\emph{single-shot}: many decisions are \emph{sequential}---you first design a
program, then decide how aggressively to roll it out, and the second decision is
constrained by what the first achieved. Third, and most important, it is
\emph{point-valued}: to turn ``oncologists respond well to clinical content'' into
a number, the model must invent a coefficient. That coefficient is an assumption,
and a plan that is optimal only at the assumed value can be fragile.

\begin{figure*}[t]
\centering
\begin{tikzpicture}[node distance=4mm]
\node[inkbox,text width=1.9cm] (brief) {Worded\\decision brief};
\node[copbox,text width=2.0cm,right=9mm of brief,yshift=10mm] (a1) {LLM\\autoformulation};
\node[copbox,text width=2.2cm,right=4mm of a1] (a2) {Point-valued model\\(one objective)};
\node[copbox,text width=1.8cm,right=4mm of a2] (a3) {Single\\optimum};
\node[copbox,text width=2.6cm,right=4mm of a3] (a4) {\textbf{Brittle} if an assumed\\number is wrong};
\node[tealbox,text width=2.2cm,right=9mm of brief,yshift=-10mm] (b1) {Typed--uncertain IR\\+ web anchors (SRJANA)};
\node[tealbox,text width=1.9cm,right=4mm of b1] (b2) {Structure-aware router};
\node[tealbox,text width=2.2cm,right=4mm of b2] (b3) {Pareto frontier\\(exact/NLP/EA)};
\node[tealbox,text width=2.6cm,right=4mm of b3] (b4) {\textbf{Robust} compromise\\with score $\rho$};
\draw[flow] (brief) to[out=40,in=180] (a1.west); \draw[flow](a1)--(a2);\draw[flow](a2)--(a3);\draw[flow](a3)--(a4);
\draw[flowt] (brief) to[out=-40,in=180] (b1.west); \draw[flowt](b1)--(b2);\draw[flowt](b2)--(b3);\draw[flowt](b3)--(b4);
\node[lbl,above=0.5mm of a1] {prevailing pipeline};
\node[lbl,below=0.5mm of b1] {YUKTI (this work)};
\end{tikzpicture}
\caption{\textbf{Why this is critical as LLMs enter every decision.} The same
worded brief can be objectified two ways. The prevailing pipeline (top) commits to
a single objective and point-valued coefficients, yielding one optimum that is
silently fragile when an invented number is off. YUKTI (bottom) keeps objectives
in conflict, carries each coefficient's uncertainty, routes the stage to the right
solver, and returns a compromise annotated with the probability $\rho$ that it
survives its own assumptions. When autoformulation drives clinical, public-health,
energy, and financial resource allocation at scale, that robustness signal is the
safeguard.}
\label{fig:criticality}
\end{figure*}

\textbf{Why this matters now.} As LLMs move from drafting text to \emph{making and
shaping decisions}---triaging patients, allocating screening outreach, sizing
energy subsidies, setting commercial strategy---the step that converts a
qualitative brief into numbers is being executed at population scale, often
invisibly. Each objectified coefficient is an assumption; a single-objective,
point-valued plan inherits all of them and emits no signal of its own fragility
(Fig.~\ref{fig:criticality}, top). When such plans drive real resource
allocation, silent brittleness is the failure mode. The discipline YUKTI
adds---keep conflicting objectives explicit, carry the uncertainty of every
invented number, and report how often a recommendation survives its own
assumptions---is exactly the safeguard that becomes critical once autoformulation
is embedded everywhere (Fig.~\ref{fig:criticality}, bottom).

\textbf{Two kinds of traceability.} The lesson is not that language models cannot
build such a model---a capable model can, and that is how the system in this paper
was built. The lesson is about the \emph{epistemic status of the output}. The risk
is \emph{mimicry of computation}: prose with the \emph{form} of an optimized
recommendation but none of its substance---no solved model, no stated coefficients,
no sensitivity, no robustness. Retrieval-augmented generation supplies \emph{source
traceability} (where a fact came from); high-stakes decisions additionally need
\emph{decision traceability}---the ability to trace how the \emph{recommended
action} depends on its inputs: which segments it comprises, which assumptions move
it, and which constraint binds. A recommendation can be perfectly sourced and still
be operationally unreliable. YUKTI targets decision traceability directly, and
\S\ref{sec:trace} adds two mechanisms---segment attribution and shadow prices---so
the property is delivered as output, not asserted in narrative.

\textbf{This paper.} We propose \textbf{YUKTI}\footnote{Sanskrit \emph{yukti}:
the fitting means, a reasoned device or stratagem.}, a framework that keeps the
LLM where it is strong---reading a situation and naming its structure---but
changes the \emph{target} of formulation and the \emph{machinery} behind it:
\begin{itemize}[leftmargin=1.1em,itemsep=1pt,topsep=2pt]
\item a \textbf{Typed Proposition IR} in which each quantitative relationship is a
function $\varphi(x,u;\theta)$ carrying qualitative shape priors, a
\emph{distribution} over its coefficients $\theta$, and a provenance tag
(given / assumed / benchmark);
\item a \textbf{structure-aware router} that probes each stage numerically and
sends it to an exact, nonlinear, or evolutionary multi-objective solver;
\item a \textbf{multi-stage frontier hand-off} that chains heterogeneous stages
across a decision DAG; and
\item \textbf{Assumption-Robust Pareto Frontiers (ARPF)}: by resampling the IR's
coefficient distributions and re-ranking the discovered solutions, YUKTI reports,
for every candidate, the probability it remains feasible and non-dominated, and
selects a \emph{robust} compromise.
\item \textbf{Benchmark-Anchored Contextual Synthesis (SRJANA)}: a front-end that,
when no dataset exists, synthesizes a contextual dataset from the extracted spec
and web-sourced benchmark anchors, fits the propositions to it, and thereby makes
the system answer briefs \emph{dynamically}.
\item \textbf{Decision traceability}: two reporting mechanisms that make a
recommended action auditable---\emph{segment attribution} (which segments make up
the action and their priority) and \emph{shadow-price analysis} (which constraint
binds and the marginal value of relaxing it).
\item \textbf{A regret bound} (Thm.~\ref{thm:arpf}) that makes $\rho$ an exact
factor of pool-regret and holds under structural ($\varepsilon$-contamination)
misspecification, plus an \textbf{out-of-sample validation} on a held-out,
RCT-anchored world where the robust compromise cuts regret by ${>}90\%$ versus a
naive point plan.
\end{itemize}
We deliberately separate novelty from reuse (\S\ref{sec:positioning}). The
solvers are well-established; the contribution is the uncertainty-typed IR, the
ARPF mechanism in the autoformulation setting, and the heterogeneous hand-off. We
make no claim of benchmark superiority; this is a system-and-method proposal with
a verified end-to-end demonstration.

\textbf{Roadmap.} The paper is in four parts. \emph{Part~I} (this section,
\S\ref{sec:related}) frames the problem and the prior autoformulation line.
\emph{Part~II} develops the method: the typed-uncertain IR, the structure-aware
router, the multi-stage hand-off, ARPF and its regret bound, SRJANA, and the
implementation. \emph{Part~III} is evidence: a worked example, a three-domain
demonstration, system evaluation, decision traceability, validation under
misspecification, validation on \emph{real} data, and a regulated deployment.
\emph{Part~IV} maps the boundaries: a head-to-head against LLM reasoning and
single-objective optimization, the point at which long-range causal coupling forces
the formalism to change, and our positioning, limitations, and future work.

\section{Related Work}
\label{sec:related}
\textbf{LLMs for optimization modeling.} NL4Opt framed natural-language-to-model
as extraction of variables, constraints, and objectives~\cite{nl4opt}.
Chain-of-Experts introduced multi-agent reasoning for operations
research~\cite{coe}. OptiMUS and OptiMUS-0.3 made the loop concrete---formulate,
generate solver code, execute, debug~\cite{optimus,optimus03}. ORLM fine-tuned
open models on synthetic instances and released the IndustryOR
benchmark~\cite{orlm}; LLMOPT codified a five-element structure (sets, parameters,
variables, objectives, constraints)~\cite{llmopt}; OR-LLM-Agent added explicit
task decomposition with a reasoning model~\cite{orllmagent}. The newest systems
emphasize a \emph{persisted} intermediate representation and post-hoc analysis:
ORPilot stores a validated JSON IR and supports what-if queries~\cite{orpilot};
OptiRepair and OptiLoop close the loop with solver-verified diagnosis and repair,
arguing for ``operational rationality'' beyond mere
feasibility~\cite{optirepair,optiloop}. All of these treat coefficients as fixed
once elicited, and almost all target single-objective formulation accuracy.
YUKTI inherits the IR-and-solver discipline but makes the IR \emph{uncertain and
typed}, and makes the target \emph{multi-objective and multi-stage}.

\textbf{Exact multi-objective programming.} The $\varepsilon$-constraint method
generates Pareto-optimal points by optimizing one objective while bounding the
rest~\cite{mavrotas2009}. AUGMECON2 augments it with lexicographic slack terms to
avoid weakly-efficient points and skip redundant iterations, producing the exact
Pareto set for integer programs~\cite{augmecon2}; AUGMECON-R extends practicality
to many objectives~\cite{augmeconr}. We use an augmented $\varepsilon$-constraint
generator on HiGHS for affine stages.

\textbf{Evolutionary multi/many-objective optimization.} NSGA-II combined fast
non-dominated sorting with crowding distance~\cite{nsga2}; NSGA-III replaced
crowding with reference-direction association for many
objectives~\cite{nsga3}, with directions placed by the Das--Dennis
simplex construction~\cite{dasdennis}; MOEA/D decomposes the problem into scalar
subproblems~\cite{moead}. We access these through pymoo~\cite{pymoo}, along with
its compromise-programming and high-trade-off MCDM utilities.

\textbf{Sequential decisions under uncertainty.} Multistage stochastic
programming models here-and-now versus recourse decisions over a scenario tree
with nonanticipativity, solved by nested Benders / L-shaped
decomposition~\cite{birge1985,birgebook}; the equivalent deterministic program
characterizes its structure~\cite{wets1974,shapiro}. YUKTI's multi-stage layer is
a deterministic \emph{frontier hand-off}---a hierarchical coupling of stages---
rather than full recourse; we treat coefficient uncertainty through ARPF instead
of a scenario tree, and discuss the relationship in \S\ref{sec:limitations}.

\textbf{Frontier uncertainty.} In multi-objective Bayesian optimization, qEHVI
and qNEHVI optimize expected hypervolume improvement and, crucially, integrate
over uncertainty \emph{in the Pareto frontier} induced by noisy
observations~\cite{qehvi,qnehvi}. ARPF is the same idea displaced to a different
source: it integrates over uncertainty in the frontier induced by uncertain
\emph{model coefficients} elicited by an LLM, rather than noisy measurements.
Robust optimization more broadly seeks solutions stable across an uncertainty
set~\cite{robust,bertsimas}; ARPF is a sampling-based, frontier-level instance
tailored to autoformulated propositions.

\ypart{II}{The YUKTI Method}
\section{The YUKTI Framework}
\label{sec:framework}
\begin{figure*}[t]
\centering
\includegraphics[width=0.92\textwidth]{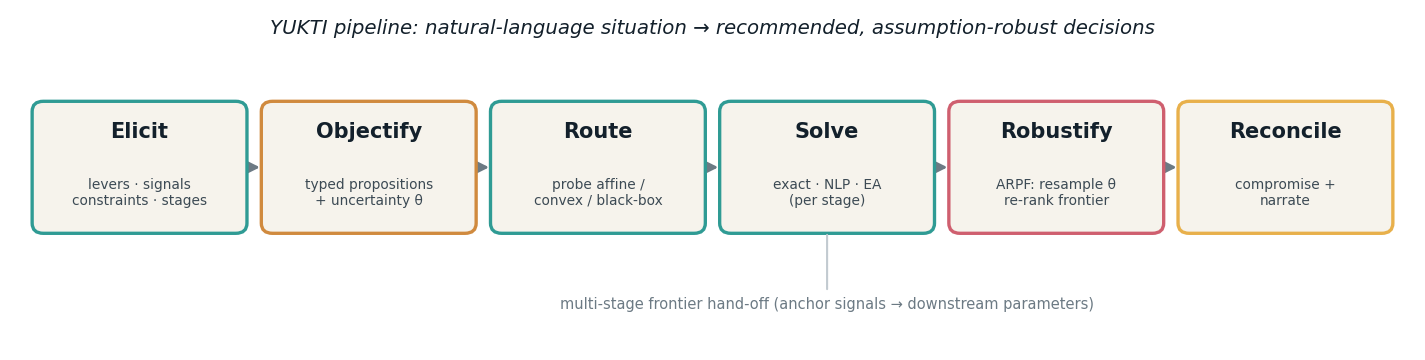}
\caption{The YUKTI pipeline. A worded situation is elicited into levers, signals,
constraints and a stage DAG; each relationship is \emph{objectified} into a typed
proposition with an uncertainty distribution; stages are routed to an exact, NLP,
or evolutionary solver; solved with a Pareto \emph{hand-off} across stages;
robustified by resampling assumptions (ARPF); and reconciled into a recommended,
robustness-scored compromise.}
\label{fig:pipeline}
\end{figure*}

A problem is a directed acyclic graph of \emph{stages}
$\mathcal{S}=\{s_1,\dots,s_T\}$ in topological order. Each stage $s$ owns levers
$x_s\in X_s$, a set of objective \emph{signals}, constraints, and a list of
upstream signal keys it consumes. Solving a stage yields a Pareto set; a
compromise is selected and its signal values are injected as parameters $u$ for
downstream stages (Fig.~\ref{fig:pipeline}). Formally, stage $s$ is the
multi-objective program
\begin{equation}
\begin{aligned}
\min_{x\in X_s}\ & F_s(x,u_s;\theta_s)=\big(f_{s,1},\dots,f_{s,m_s}\big),\\
&\text{s.t. } g_{s,j}(x,u_s;\theta_s)\le 0,
\end{aligned}
\label{eq:stage}
\end{equation}
where maximization signals are negated into $\min$ form, $u_s$ collects upstream
hand-offs, and $\theta_s$ are coefficients drawn from the proposition ledger.

\subsection{Pareto optimality}
For min-sense vectors, $a$ \emph{dominates} $b$ ($a\prec b$) iff
$a_i\le b_i\,\forall i$ and $a_k<b_k$ for some $k$. The Pareto set of a stage is
$\{x^\star: \nexists\, x\in X_s,\ F_s(x)\prec F_s(x^\star)\}$. YUKTI's solvers
return a finite non-dominated approximation $\mathcal{F}_s$.

\section{Typed Proposition Intermediate Representation}
\label{sec:ir}
The distinguishing object is the proposition.
\begin{definition}[Typed proposition]
A typed proposition is a tuple
$p=\langle \varphi,\ \theta,\ \Theta,\ \sigma,\ \pi\rangle$ where
$\varphi(x,u;\theta)\in\mathbb{R}$ is a parametric functional form;
$\theta$ are nominal coefficients; $\Theta=\{(\mathrm{dist}_k,\eta_k)\}$ assigns
each uncertain coefficient a multiplicative distribution with relative spread
$\eta_k$ (normal, lognormal, or uniform); $\sigma$ records qualitative shape
priors (monotonicity $\pm$ per lever; curvature: linear / convex / saturating /
bilinear); and $\pi\in\{\text{given, assumed, benchmark}\}$ is provenance.
\end{definition}
A signal is a proposition plus a sense (max/min); a constraint is a proposition
plus an operator and right-hand side. Two design choices matter. First, the same
proposition object can back both a signal and a constraint---e.g.\ a
\emph{fatigue} proposition is simultaneously an objective to minimize and a
guardrail $\le 0.35$---so that a single resampled draw moves both consistently
(shared identity, \S\ref{sec:arpf}). Second, because $\varphi$ is an arbitrary
callable, the IR spans linear, smooth-nonlinear, and nonconvex/black-box
relationships without changing the downstream machinery; structure is
\emph{discovered}, not declared (\S\ref{sec:router}).

A coefficient draw is multiplicative, e.g.\ for a lognormal coefficient
$\theta_k^{(j)}=\theta_k\cdot e^{\,\xi},\ \xi\sim\mathcal{N}(0,\eta_k^2)$,
keeping signs and order of magnitude while perturbing the elicited guess. This is
what turns ``the model invented a number'' from a liability into a quantified
input.

\section{Structure-Aware Routing}
\label{sec:router}
Before solving, YUKTI decides \emph{how}. For a stage with fixed $u$ and nominal
$\theta$, it forms the stacked map $h(x)=(F_s(x),G_s(x))$ on the continuous
relaxation and probes it on random pairs $x^{(1)},x^{(2)}$ with interpolation
weight $\alpha$ (Alg.~\ref{alg:route}). Affinity is accepted if
$h(\alpha x^{(1)}{+}(1{-}\alpha)x^{(2)})$ matches
$\alpha h(x^{(1)}){+}(1{-}\alpha)h(x^{(2)})$ within a scaled tolerance for all
pairs; convexity (in min-sense) is accepted if the interpolated point never lies
below the chord (Jensen). The route is then: \emph{affine} $\rightarrow$ exact
(MO-LP, or MO-MILP if integer levers are present); \emph{smooth, convex,
continuous} $\rightarrow$ NLP; otherwise $\rightarrow$ evolutionary. Probing the
\emph{relaxation} is essential: rounding integer levers turns an affine model
into a step function and would defeat the linearity test, so structure is
detected continuously while integrality is enforced inside the exact solver.

\begin{algorithm}[t]
\caption{Structure-aware routing of a stage}
\label{alg:route}
\begin{algorithmic}[1]
\Require stage $s$, upstream $u$, probes $R$, pairs $N$, tol $\tau$
\State sample $R$ points; set scale $\leftarrow$ mean $|h|$ per component
\State affine, convex $\leftarrow$ true, true
\For{$N$ random pairs $(x^{(1)},x^{(2)})$, $\alpha\in(0,1)$}
  \State $\Delta \leftarrow h(\alpha x^{(1)}{+}(1{-}\alpha)x^{(2)}) - [\alpha h(x^{(1)}){+}(1{-}\alpha)h(x^{(2)})]$
  \If{$\max|\Delta| > \tau\cdot(\text{scale}{+}1)$} affine $\leftarrow$ false \EndIf
  \If{any component of $\Delta > \tau\cdot(\text{scale}{+}1)$} convex $\leftarrow$ false \EndIf
\EndFor
\If{affine} \Return exact \Comment{MILP if integer levers else LP}
\ElsIf{convex and continuous} \Return nlp
\Else{} \Return ea
\EndIf
\end{algorithmic}
\end{algorithm}

\section{Solvers}
\label{sec:solvers}
\textbf{Exact (affine).} When a stage is affine, YUKTI extracts the coefficient
matrices by finite differences on the relaxation (the gradient is constant), then
runs an augmented $\varepsilon$-constraint generator~\cite{augmecon2}. With the
first objective as primary and the rest bounded on a grid between their payoff
ranges $[\,f_k^{\text{ideal}},f_k^{\text{nadir}}\,]$,
\begin{equation}
\min_{x\in X_s,\,s_k\ge 0}\; f_1(x)-\delta\!\!\sum_{k\ge 2}\frac{s_k}{r_k}
\;\;\text{s.t. } f_k(x)+s_k=e_k,
\label{eq:augmecon}
\end{equation}
where $r_k$ is the range of objective $k$ and $\delta$ a small augmentation. Each
grid point is an LP (HiGHS \texttt{linprog}) or MILP (HiGHS \texttt{milp}) with
integrality enforced on $x$; results are filtered to the non-dominated set. For
two or three objectives this yields the exact (or near-exact) Pareto front.

\textbf{Nonlinear (convex, smooth).} YUKTI applies the $\varepsilon$-constraint
with SLSQP, sweeping the secondary-objective grid with multiple random restarts,
then filters non-dominance.

\textbf{Evolutionary (nonconvex / black-box / mixed-integer).} Otherwise YUKTI
uses NSGA-II for $\le 2$ objectives and NSGA-III with Das--Dennis reference
directions for $\ge 3$~\cite{nsga2,nsga3,dasdennis}, via pymoo~\cite{pymoo}, with
constraint handling by feasibility-first domination and integer levers realized
by rounding in evaluation.

\section{Multi-Stage Frontier Hand-off}
\label{sec:multistage}
Stages are solved in topological order. After stage $s$ produces $\mathcal{F}_s$,
YUKTI selects a (robust) compromise $x^\star_s$ and \emph{hands off} its signal
values:
\begin{equation}
u_{s+1} \supseteq \big\{\, \text{key}(f_{s,i}) \mapsto f_{s,i}(x^\star_s) \,\big\}_{i=1}^{m_s}.
\end{equation}
Downstream propositions read these as parameters. In the demonstration, the
email-engagement stage exposes \emph{open rate} and \emph{CTR}; the transition
stage consumes them through an engagement-quality term
$\mathrm{EQ}=\min(1,\,0.6\cdot\text{open}+4\cdot\text{CTR})$ that scales how much
share-of-voice and continuity the rollout can preserve. This is a hierarchical
coupling (strategic stage fixes the environment of the operational stage), not a
recourse model; \S\ref{sec:limitations} states the trade-off.

\textbf{Distributional hand-off.} Passing the point values $f_{s,i}(x^\star_s)$
commits the downstream stage to a single number---in tension with the paper's own
thesis against point values. But under ARPF we already hold, at $x^\star_s$, the
\emph{distribution} of each handed-off signal across the $K$ assumption draws.
YUKTI therefore hands off that distribution (its quantiles), and stage $s{+}1$
evaluates feasibility and value in expectation over it, so a stage-1 signal that is
high in the mean but heavy-tailed no longer masks downstream risk.
\S\ref{sec:validation} exhibits a point hand-off that reports a stage-2 floor as
satisfied while the distributional hand-off prices a $60\%$ violation probability.

\section{Assumption-Robust Pareto Frontiers}
\label{sec:arpf}
A frontier computed at the nominal $\theta$ answers ``what is optimal if my
elicited numbers are exactly right?''---rarely the operative question. ARPF
answers ``what stays good when they are wrong?'' Let $\mathcal{X}_s$ be the
discovered decision pool. For $K$ draws $\theta^{(k)}\sim\Theta$ (shared across a
proposition's signal/constraint roles via object identity), recompute objectives
and feasibility for every pooled decision, take the non-dominated subset of the
feasible ones, and accumulate membership:
\begin{equation}
\rho(x)=\frac{1}{K}\sum_{k=1}^{K}\mathbf{1}\!\left[x \text{ feasible} \wedge x\in
\mathrm{ND}\big(F_s(\cdot;\theta^{(k)})\big)\right].
\label{eq:arpf}
\end{equation}
$\rho(x)$ is the empirical probability that $x$ survives as a sensible choice
under assumption uncertainty; the per-decision objective spread
(mean $\pm$ std across draws) traces an \emph{envelope} around the nominal
frontier (Fig.~\ref{fig:onco}b). We then choose the compromise \emph{among robust
solutions}: restrict to $\rho(x)\ge$ the median, and within that set minimize the
normalized distance to the ideal point (compromise programming),
\begin{equation}
x^\star_s=\arg\min_{x:\rho(x)\ge \rho_{0.5}}\;\sum_i w_i
\Big(\tfrac{f_{s,i}(x)-f_{s,i}^{\text{ideal}}}{f_{s,i}^{\text{nadir}}-f_{s,i}^{\text{ideal}}}\Big)^2 .
\label{eq:compromise}
\end{equation}
Finally, a one-at-a-time tornado ranks which coefficients move the recommendation
most (Figs.~\ref{fig:portfolio}b,~\ref{fig:onco}c), separating \emph{assumed}
from \emph{benchmark} provenance so attention goes where it is both influential
and weakly grounded.

\textbf{Structural misspecification ($\varepsilon$-contamination).} Resampling
$\theta$ within a parametric family addresses \emph{parameter} error but not a
\emph{wrong functional form}---the error that usually dominates in practice. YUKTI
therefore lets $\Theta$ be an $\varepsilon$-contamination class
$\Theta=(1-\varepsilon)\Theta_0+\varepsilon\,Q$: with probability $\varepsilon$ a
draw replaces a proposition's form with an alternative from $Q$ (for instance
injecting a diminishing-returns interaction absent from a nominal additive form).
ARPF over this $\Theta$ scores robustness to the model being \emph{structurally}
wrong, not merely mis-calibrated; \S\ref{sec:validation} shows this is precisely
what lets the robust choice generalize to a held-out world whose structure the
nominal model never represented.

\begin{algorithm}[t]
\caption{YUKTI multi-stage solve with ARPF}
\label{alg:engine}
\begin{algorithmic}[1]
\Require problem $\mathcal{S}$ in topological order; draws $K$
\State $u\leftarrow\{\}$
\For{stage $s$ in order}
  \State method $\leftarrow$ \textsc{Route}$(s,u)$ \Comment{Alg.~\ref{alg:route}}
  \State $\mathcal{F}_s\leftarrow$ \textsc{Solve}$_{\text{method}}(s,u)$
  \State $\{\rho,\text{env}\}\leftarrow$ \textsc{ARPF}$(s,u,\mathcal{F}_s,K)$ \Comment{Eq.~\ref{eq:arpf}}
  \State $x^\star_s\leftarrow$ \textsc{RobustCompromise}$(\mathcal{F}_s,\rho)$ \Comment{Eq.~\ref{eq:compromise}}
  \State $u\leftarrow u\cup\{\,f_{s,i}(x^\star_s)\,\}_i$ \Comment{hand-off}
\EndFor
\State \Return $\{(\mathcal{F}_s,x^\star_s,\rho_s)\}$
\end{algorithmic}
\end{algorithm}

\section{A Regret Bound for ARPF}
\label{sec:theory}
ARPF's robustness score is not merely heuristic: it is an exact factor of decision
regret. Fix a finite pool $P$ and a distribution $\Theta$ over models $\theta$
(parameters \emph{and} functional form). Let $V(x;\theta)\in[0,V_{\max}]$ be a
bounded decision value with $V(x;\theta)=0$ when $x$ is infeasible under $\theta$,
let $V^\star(\theta)=\max_{x\in P}V(x;\theta)$ be the in-hindsight pool optimum, and
$\rho(x)=\Pr_\theta[x\in\arg\max_{x'\in P}V(x';\theta)]$ the decision-robustness of
Eq.~\ref{eq:arpf}. Define the \emph{pool-regret}
$R(\hat x)=\mathbb{E}_\theta[V^\star(\theta)-V(\hat x;\theta)]$.

\begin{theorem}[ARPF regret decomposition and certificate]
\label{thm:arpf}
For any $\hat x\in P$, writing
$\bar g(\hat x)=\mathbb{E}_\theta[V^\star-V(\hat x)\mid \hat x\notin\arg\max]$ and
$B(\hat x)=\operatorname*{ess\,sup}_\theta(V^\star(\theta)-V(\hat x;\theta))$:
\emph{(i)} the regret decomposes exactly,
$R(\hat x)=(1-\rho(\hat x))\,\bar g(\hat x)$;
\emph{(ii)} hence $R(\hat x)\le(1-\rho(\hat x))\,B(\hat x)$;
\emph{(iii)} for a shared bound $\bar B\ge B(\cdot)$, $\arg\max_x\rho(x)$ minimizes
the certificate $(1-\rho)\bar B$ and is the Bayes rule under $0$--$1$
pool-optimality loss; and
\emph{(iv)} $\Pr(|\hat\rho_K-\rho|\ge t)\le 2e^{-2Kt^2}$ (Hoeffding).
The result holds for \emph{arbitrary} $\Theta$, in particular an
$\varepsilon$-contamination class $\Theta=(1-\varepsilon)\Theta_0+\varepsilon Q$;
hence the bound accounts for structural misspecification, not only parameter noise.
\end{theorem}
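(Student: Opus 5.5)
The argument conditions on the event $A(\hat x)=\{\theta:\hat x\in\arg\max_{x'\in P}V(x';\theta)\}$, so that $\rho(\hat x)=\Pr_\theta[A]$ by definition. The gap $D(\theta)=V^\star(\theta)-V(\hat x;\theta)$ is measurable, nonnegative, and bounded by $V_{\max}$, because $V\in[0,V_{\max}]$ and $P$ is finite. It vanishes identically on $A$. For (i), we apply the law of total expectation:
\[
R(\hat x)=\mathbb{E}[D\mathbf{1}_A]+\mathbb{E}[D\mathbf{1}_{A^c}]=0+\Pr(A^c)\,\mathbb{E}[D\mid A^c]=(1-\rho(\hat x))\,\bar g(\hat x).
\]
The degenerate case $\rho(\hat x)=1$ has to be handled separately. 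There $\bar g$ is undefined, but both sides are $0$ under the convention $0\cdot\bar g=0$, since $D=0$ almost surely. Because $\theta$ ranges over functional forms as well as parameters, we treat it abstractly as a point of a measurable space. Infeasibility is already encoded by $V=0$, so no structure on $\Theta$ enters the argument.

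For (ii), we bound the conditional mean by the essential supremum: $\mathbb{E}[D\mid A^c]\le\operatorname{ess\,sup}_\theta D=B(\hat x)$. This holds because $D\le B$ almost surely, and hence also almost surely on $A^c$ whenever $\Pr(A^c)>0$. Multiplying by $1-\rho(\hat x)$ and using (i) gives the bound.

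For (iii), the first claim is that the map $x\mapsto(1-\rho(x))\bar B$ is decreasing in $\rho$ for a fixed $\bar B\ge 0$, so its minimizer is $\arg\max\rho$. The second claim concerns the $0$--$1$ loss $L(x,\theta)=\mathbf{1}[x\notin\arg\max V(\cdot;\theta)]$. Its Bayes risk is $\mathbb{E}_\theta L(x,\theta)=1-\rho(x)$, which is minimized by maximizing $\rho$. When $\arg\max$ is not unique, the loss counts any optimizer as correct, so no tie-breaking is needed.

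For (iv), $\hat\rho_K$ is the mean of $K$ i.i.d.\ indicators $\mathbf{1}[\theta^{(k)}\in A(x)]\in\{0,1\}$ with mean $\rho(x)$. Hoeffding's inequality then gives $2e^{-2Kt^2}$ directly.

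The main obstacle is the match between the theorem's $\rho$ and Eq.~\ref{eq:arpf}. That equation counts membership in the non-dominated set of a vector objective, whereas the theorem uses argmax of a scalar $V$. I will state the theorem for the scalar argmax version, as written. I will note in passing that if $V$ is a strictly monotone scalarization of $F_s$, then argmax membership implies non-dominance. In that case the empirical ND frequency upper-bounds the argmax $\rho$, and the certificate remains valid with the argmax $\rho$.

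The closing remark about $\varepsilon$-contamination needs nothing beyond the general case. Nothing in (i)--(iv) uses any property of $\Theta$ other than its being a probability measure. A mixture $(1-\varepsilon)\Theta_0+\varepsilon Q$ is again a probability measure on the space of models. The identities and inequalities therefore apply verbatim, with $\rho$, $\bar g$, and $B$ computed under the mixture.
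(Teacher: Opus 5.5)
Your proof is correct and follows the paper's argument essentially step for step: total expectation on the event $\{\hat x\in\arg\max\}$ for (i), the almost-sure bound $0\le V^\star-V(\hat x)\le B(\hat x)$ for (ii), monotonicity in $\rho$ plus the $0$--$1$ Bayes risk $1-\rho(x)$ for (iii), Hoeffding on i.i.d.\ Bernoulli$(\rho)$ indicators for (iv), and the contamination clause because only $\theta\sim\Theta$ is used. Your handling of the $\rho(\hat x)=1$ convention and your flagging of the mismatch with Eq.~\ref{eq:arpf} go beyond the paper, which silently identifies the two notions of $\rho$; your side claim that the non-dominance frequency upper-bounds the argmax $\rho$ needs some pool member to be feasible under each draw (if none is, every $x$ attains $V^\star=0$ yet the feasible non-dominated set is empty), and that larger frequency must not be substituted into the certificate.
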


\begin{proof}
(i) $V^\star-V(\hat x)=0$ on the event $\{\hat x\in\arg\max\}$, so by the law of
total expectation $R(\hat x)=\Pr[\hat x\notin\arg\max]\cdot\mathbb{E}[V^\star-V(\hat x)\mid\hat x\notin\arg\max]=(1-\rho(\hat x))\bar g(\hat x)$.
(ii) $0\le V^\star-V(\hat x)\le B(\hat x)$ a.s.\ and $=0$ on the argmax event, so
$R\le(1-\rho)B$. (iii) For shared $\bar B$, $(1-\rho(x))\bar B$ decreases in
$\rho(x)$; and under $0$--$1$ loss $L(x;\theta)=\mathbf 1[x\notin\arg\max]$,
$\mathbb{E}_\theta L=1-\rho(x)$, minimized by $\arg\max_x\rho$. (iv)
$\mathbf 1[x\in\arg\max]$ is Bernoulli$(\rho)$; Hoeffding over $K$ i.i.d.\ draws
gives the bound. The argument uses only boundedness and $\theta\sim\Theta$, so
$\Theta$ may mix functional forms.
\end{proof}

\noindent\textbf{Scope.} $R$ is regret against the \emph{best-in-pool}, not an
unattainable global optimum---the right object because ARPF re-ranks a fixed
discovered pool. Part (iii) certifies the $0$--$1$ pool-regret; for \emph{value}
regret under misspecification the robust compromise additionally uses the downside
(lower tail) of the same per-action value distribution, validated next. The
certificate is checked numerically: the identity (i) reproduces $R$ exactly and
(ii) holds with $\rho{=}0.24,\ \bar g{=}0.21,\ B{=}0.45$ (Fig.~\ref{fig:validation}b).

\section{Dynamic Contextual Data Synthesis}
\label{sec:bacs}
Autoformulation assumes the numbers exist---that a coefficient can be elicited or
a table queried. Often neither holds: a fresh brief arrives with conflicting
objectives, named levers, and \emph{no} data. The point-valued IRs of prior work
have nothing to fit. YUKTI's typed propositions invite a different move: if every
relationship is a parametric form with shape priors and an anchor, we can
\emph{synthesize} a contextual dataset that honours those priors and matches
public benchmarks, then fit the propositions to it. We call this
\textbf{Benchmark-Anchored Contextual Synthesis (SRJANA)}.

A \emph{scenario spec} lists context dimensions (each a categorical with levels
and a latent affinity per level), levers with bounds, and signals---each annotated
with a sense, a kind (rate / score / cost), signed lever effects (the shape
priors), and a \emph{benchmark anchor} $\mu^\star$ (a target marginal, ideally
retrieved from a trusted source at query time). SRJANA draws a population of context
units with latent affinity $z_u=\sum_{\text{dim}}\tilde a(\text{level}_u)$ and a
set of decision configurations $\{x_c\}$, then forms the linear predictor
$L_{c,u}=\sum_\ell \beta_{s,\ell}\tilde x_{c,\ell}+\gamma_s z_u$ for each signal
$s$. The defining step is \emph{calibration to the benchmark}: for a rate signal
choose the intercept $\alpha_s$ so the simulated grand mean matches the anchor,
\begin{equation}
\alpha_s:\quad \tfrac{1}{NM}\textstyle\sum_{c,u}\sigma\!\big(\alpha_s+L_{c,u}\big)=\mu^\star_s,
\label{eq:bacs}
\end{equation}
solved by one-dimensional root-finding ($\sigma$ monotone in $\alpha$); for a
linear score/cost signal $\alpha_s=\mu^\star_s-\overline{L}$ in closed form. SRJANA
then simulates outcomes, aggregates to configuration level, and fits the response
surfaces by OLS---returning coefficients \emph{and} standard errors that become
the propositions' nominal $\theta$ and uncertainty $\Theta$ (Alg.~\ref{alg:bacs}).
The synthetic data is therefore realistic by construction (its marginals are the
benchmarks) while its conditional structure obeys the elicited signs and the
optimizer inherits honest, data-derived uncertainty for ARPF.

Three additions make SRJANA faithful rather than merely calibrated. First,
benchmarks usually fix not just a mean but \emph{contrasts}---``rep-triggered
email earns roughly twice the click rate of mass.'' We therefore calibrate a
small parameter vector $(\alpha_s,\kappa_s)$ (intercept and a global effect scale)
against a system of moment equations,
\begin{equation}
\bar p(\alpha,\kappa)=\mu^\star,\qquad
\frac{\bar p_{\ell=1}(\alpha,\kappa)}{\bar p_{\ell=0}(\alpha,\kappa)}=r^\star_\ell,
\label{eq:srjana-moments}
\end{equation}
solved by least squares---so both the level and the key effect ratios match the
literature. Second, SRJANA injects the \emph{pathologies} real CRM extracts carry
(Apple Mail Privacy Protection open inflation, bounce/spam statuses, consent
gating), so downstream analytics confront the same artefacts they will in
production and (e.g.) correctly prefer click-to-open over raw open rate. Third,
and most important, SRJANA closes a \emph{self-consistency} loop: after synthesis
it refits the response surface and checks that the recovered coefficients match
the seeded priors. Across the three scenarios below, sign recovery is $100\%$ and
the calibrated marginals reproduce their targets to rounding
(Fig.~\ref{fig:srjana}). A dataset that could not recover its own generating
priors would be unfit to optimize against; SRJANA refuses to emit one.

\begin{algorithm}[t]
\caption{SRJANA: benchmark-anchored contextual synthesis}
\label{alg:bacs}
\begin{algorithmic}[1]
\Require spec (context, levers, signals w/ anchors $\mu^\star$, priors $\beta,\gamma$)
\State sample $N$ units; $z_u\gets$ standardized sum of level affinities
\State sample $M$ configs $x_c$; normalize to $\tilde x_c\in[0,1]^L$
\For{each signal $s$}
  \State $L_{c,u}\gets \sum_\ell \beta_{s,\ell}\tilde x_{c,\ell}+\gamma_s z_u$
  \State calibrate $(\alpha_s,\kappa_s)$ by Eq.~\ref{eq:srjana-moments} (rate) or closed form (score/cost)
  \State simulate outcomes; inject pathologies; aggregate to config means $\bar y_{s,c}$
  \State fit OLS $\bar y_{s}\!\sim\! x$; emit proposition $(\theta=\hat\beta,\ \Theta\propto \mathrm{SE})$
\EndFor
\State refit \& check sign/marginal recovery; resample if below tolerance
\State \Return dataset, fitted propositions \Comment{feed Alg.~\ref{alg:engine}}
\end{algorithmic}
\end{algorithm}

Together SRJANA and the elicitor make the system \emph{dynamic}: a natural-language
brief is parsed into a spec, anchors are pulled from the web, SRJANA manufactures a
calibrated data foundation, and the engine returns an assumption-robust
recommendation---no pre-existing dataset required.

\section{Implementation}
YUKTI is a Python package (\texttt{ir, router, solvers, mcdm, robust, engine,
cases, viz}). Exact solves use SciPy's HiGHS interface
(\texttt{linprog}/\texttt{milp})~\cite{scipy,highs}; nonlinear solves use SciPy
SLSQP; evolutionary solves use pymoo 0.6~\cite{pymoo}. The router, solvers, MCDM
and ARPF are solver-agnostic: a stage only needs to evaluate its propositions.
The same code path handles an exact mixed-integer program and a nonconvex
many-objective program, differing only in the route taken.

\ypart{III}{Evidence}
\section{Worked Example: One Brief, End to End}
\label{sec:demo}
We report two cases run end-to-end. These are illustrative: coefficients are
elicited placeholders, not measured data (\S\ref{sec:limitations}).

\subsection{Exact path: mixed-integer portfolio}
A campaign-mix stage chooses integer campaign counts across four channels to
maximize reach and minimize cost and risk, under a budget and an activity cap.
The router detects an affine model with integer variables and selects the exact
MO-MILP path; augmented $\varepsilon$-constraint on HiGHS returns \textbf{23
non-dominated points} (Fig.~\ref{fig:portfolio}a). The robust compromise commits
to a low-cost, low-risk mix (reach $34$, cost $25$, risk $0.9$) with
$\rho=0.99$; the tornado shows the recommendation is most exposed to the
reach-per-campaign coefficient of the highest-volume channel
(Fig.~\ref{fig:portfolio}b), flagged \emph{benchmark} rather than \emph{assumed}.

\begin{figure}[t]
\centering
\begin{subfigure}{0.49\columnwidth}
\includegraphics[width=\textwidth]{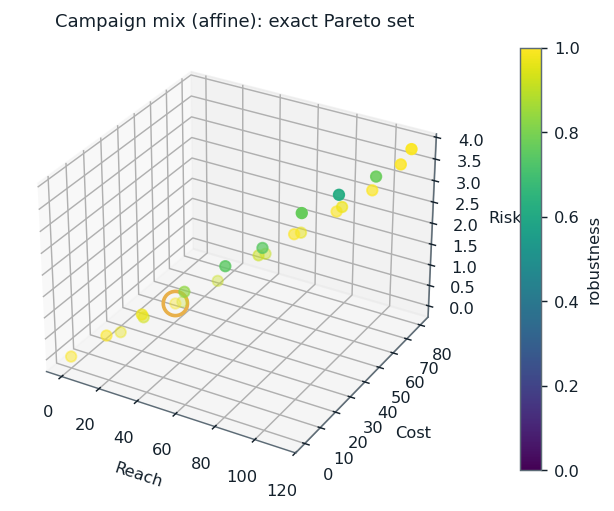}
\caption{exact Pareto set}
\end{subfigure}\hfill
\begin{subfigure}{0.49\columnwidth}
\includegraphics[width=\textwidth]{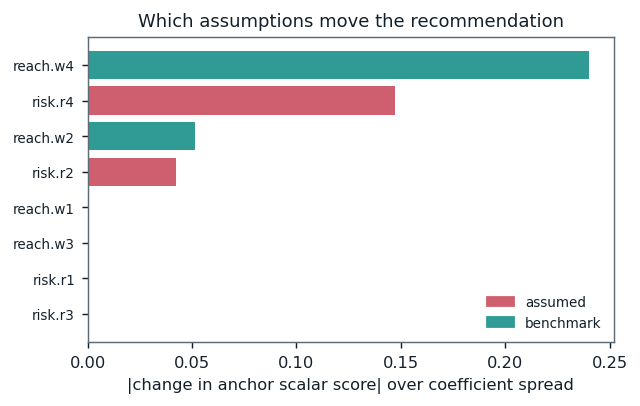}
\caption{assumption tornado}
\end{subfigure}
\caption{Mixed-integer portfolio solved exactly (augmented
$\varepsilon$-constraint, HiGHS): 23 Pareto points coloured by robustness, with
the robust compromise ringed (a); coefficients ranked by influence on the
recommendation (b).}
\label{fig:portfolio}
\end{figure}

\subsection{Multi-stage nonconvex: oncology channel shift}
The second case is a worded pharmaceutical brief: during loss of exclusivity,
pivot two lung-cancer brands from high-frequency field visits to scalable email,
without losing share of voice. YUKTI formulates it as two coupled stages.
\emph{Stage 1 (email engagement design)} has five levers---personalization,
rep-triggered share, clinical-content share, send-time optimization, cadence---and
four signals (open rate, CTR, cost, fatigue risk) under a budget and a fatigue
guardrail. The bilinear CTR term and the kinked fatigue penalty make the stage
nonconvex; the router selects NSGA-III, returning a \textbf{40-point} frontier
(Fig.~\ref{fig:onco}a). The robust compromise ($\rho=0.97$) leans on
personalization and send-time optimization at \emph{low} cadence---reaching
$\approx 43\%$ open and $14\%$ CTR at zero fatigue risk inside budget---because
the fatigue guardrail caps what additional volume can buy. ARPF's envelope
(Fig.~\ref{fig:onco}b) and tornado (Fig.~\ref{fig:onco}c) identify the open-rate
personalization elasticity and cost coefficients as the assumptions most worth
verifying.

\emph{Stage 2 (field-to-email transition)} consumes Stage 1's open rate and CTR
through the engagement-quality term and chooses an integer transition window, a
shift pace, and a residual high-touch share, trading share-of-voice retention
against transition cost and engagement continuity, with a floor
$\text{SoV}\ge 0.6$. It is again nonconvex (integer window, bilinear pace term);
NSGA-III returns a \textbf{28-point} frontier. The robust compromise
($\rho=0.98$) phases the shift gradually, keeps a small ($\approx 5\%$) residual
high-touch tier for top targets, and retains $\approx 0.89$ share of voice at
$0.79$ continuity (Fig.~\ref{fig:onco}d). Table~\ref{tab:anchors} summarizes both
stages.

\begin{figure}[t]
\centering
\begin{subfigure}{0.49\columnwidth}
\includegraphics[width=\textwidth]{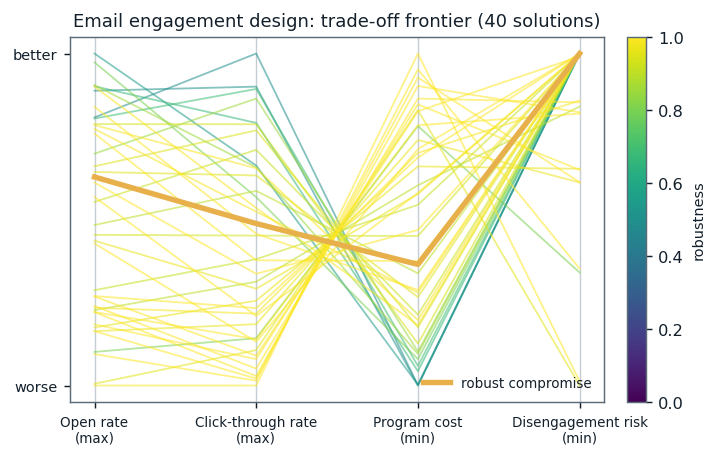}
\caption{stage 1 frontier}
\end{subfigure}\hfill
\begin{subfigure}{0.49\columnwidth}
\includegraphics[width=\textwidth]{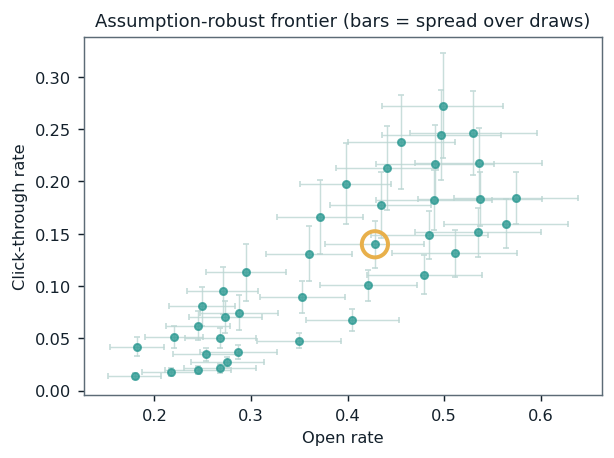}
\caption{ARPF envelope}
\end{subfigure}

\begin{subfigure}{0.49\columnwidth}
\includegraphics[width=\textwidth]{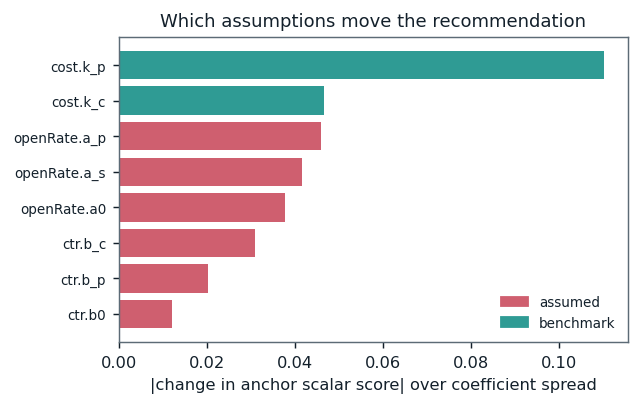}
\caption{stage 1 tornado}
\end{subfigure}\hfill
\begin{subfigure}{0.49\columnwidth}
\includegraphics[width=\textwidth]{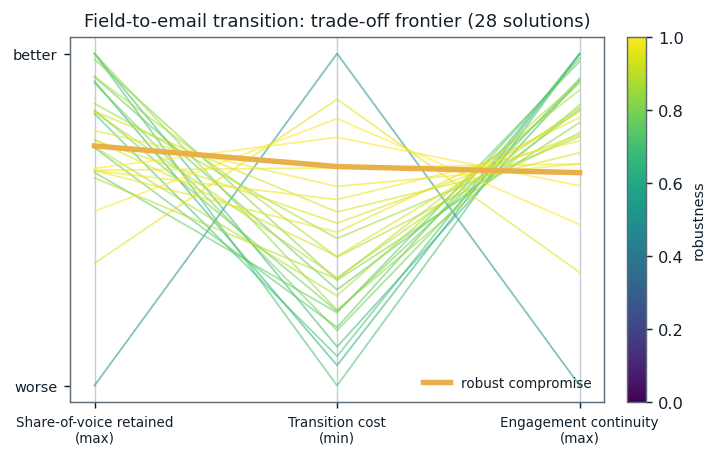}
\caption{stage 2 frontier}
\end{subfigure}
\caption{Two-stage oncology channel shift (NSGA-III + ARPF). Parallel-coordinate
frontiers are coloured by robustness with the robust compromise in gold (a,d);
(b) shows the assumption-robust envelope (objective spread over $K{=}200$ draws);
(c) ranks the assumptions that most move the stage-1 recommendation.}
\label{fig:onco}
\end{figure}

\begin{table}[t]
\centering\small
\caption{Robust compromises (verified run). Class is chosen by the router.}
\label{tab:anchors}
\setlength{\tabcolsep}{5pt}
\resizebox{\columnwidth}{!}{%
\begin{tabular}{@{}llrr@{}}
\toprule
Stage & Class & Front & $\rho$ \\
\midrule
Portfolio (campaign mix) & MO-MILP (exact) & 23 & 0.99 \\
Oncology s1 (engagement) & MO-EA (NSGA-III) & 40 & 0.97 \\
Oncology s2 (transition) & MO-EA (NSGA-III) & 28 & 0.98 \\
\bottomrule
\end{tabular}}
\end{table}

\section{Multi-Scenario Demonstration}
\label{sec:multiscenario}
To show the pipeline is domain-general, we drive three deliberately diverse briefs
through the \emph{same} SRJANA+YUKTI code with no per-domain tuning: oncology HCP
email engagement during loss of exclusivity (pharmaceutical commercial),
colorectal-cancer (CRC) screening uptake across deprivation strata (public
health), and residential heat-pump adoption across utility regions (energy).
Benchmark anchors are pulled from public sources at formulation time---IQVIA /
Veeva / phamax engagement rates for the first; CRC-screening randomized trials
(opt-out mailed FIT kits, serial SMS reminders) and national participation
statistics for the second; IRA/HEEHRA rebate levels and the heat-pump
\emph{additionality} literature for the third. In every case SRJANA reproduced the
benchmark marginals \emph{and} the key effect ratios to rounding
(Fig.~\ref{fig:srjana}), and the analytics recovered domain-correct drivers
(personalization and rep-triggering for email; mailed-FIT and patient navigation
for screening; rebate depth and installer coverage for heat pumps), i.e.\ $100\%$
seeded-sign recovery.

The router then sends each design stage to the solver its structure warrants. The
affine linear-probability response surfaces route to the exact
$\varepsilon$-constraint solver---the screening design yields a 135-point exact
frontier---while the heat-pump design, whose additionality couples to targeting,
is nonconvex and routes to NSGA-III; every second-stage rollout (a kinked,
bilinear continuity model) routes to the evolutionary solver (42, 33 and 42
points respectively). Table~\ref{tab:scenarios} reports the design-stage frontiers
and robust compromises; Fig.~\ref{fig:scenarios} shows them. One worded brief
$\rightarrow$ web-anchored synthesized data $\rightarrow$ \emph{routed}
multi-objective solve $\rightarrow$ assumption-robust compromise, across three
unrelated domains through one uniform path, is the generality claim made concrete.

\begin{table}[t]
\centering\small
\caption{Three diverse scenarios through one pipeline (verified run). SRJANA
reproduces every benchmark anchor and recovers $100\%$ of seeded driver signs;
the design stage routes to the solver its structure warrants.}
\label{tab:scenarios}
\setlength{\tabcolsep}{5pt}
\resizebox{\columnwidth}{!}{%
\begin{tabular}{@{}llccc@{}}
\toprule
Scenario & Domain & Obj. & Solver & Front / $\rho$\\
\midrule
Oncology email & Pharma commercial & 3 & MO-LP (exact) & 21 / 0.95\\
CRC screening & Public health & 3 & MO-LP (exact) & 135 / 1.00\\
Heat-pump adoption & Energy & 3 & MO-EA (NSGA-III) & 38 / 1.00\\
\bottomrule
\end{tabular}}
\end{table}

\begin{figure}[t]
\centering
\begin{subfigure}{0.49\columnwidth}\includegraphics[width=\textwidth]{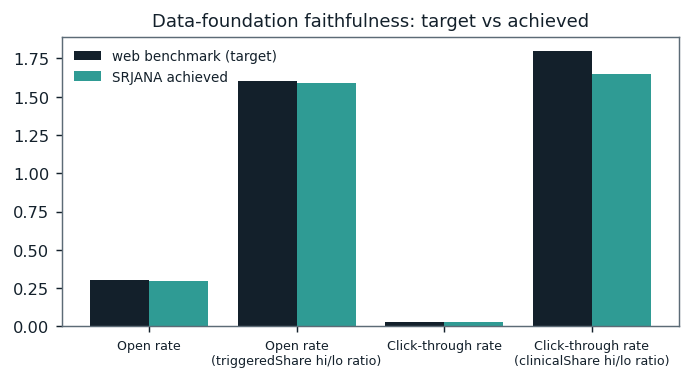}\caption{pharma}\end{subfigure}\hfill
\begin{subfigure}{0.49\columnwidth}\includegraphics[width=\textwidth]{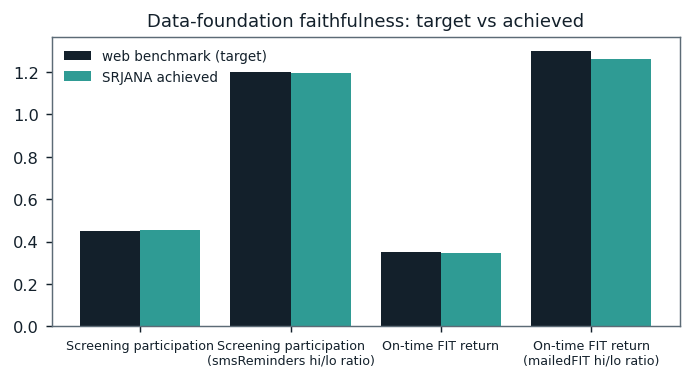}\caption{public health}\end{subfigure}
\\[2pt]
\begin{subfigure}{0.49\columnwidth}\includegraphics[width=\textwidth]{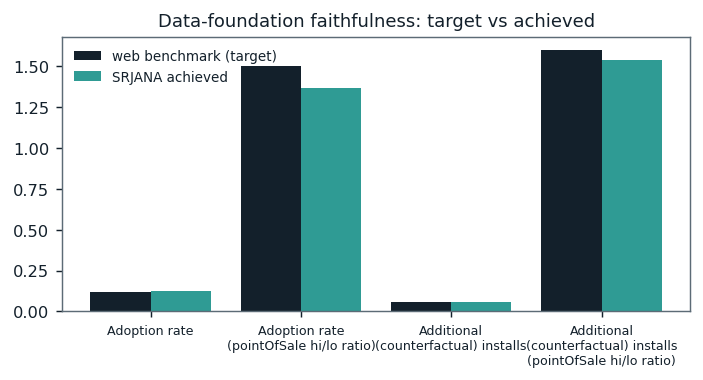}\caption{energy}\end{subfigure}
\caption{SRJANA self-consistency across three domains: the synthesized population
marginals \emph{and} the seeded effect ratios (teal) match the web-sourced
benchmark targets (ink) to rounding. A data foundation that cannot reproduce its
own anchors is rejected before the optimizer ever runs.}
\label{fig:srjana}
\end{figure}

\begin{figure*}[t]
\centering
\begin{subfigure}{0.32\textwidth}\includegraphics[width=\textwidth]{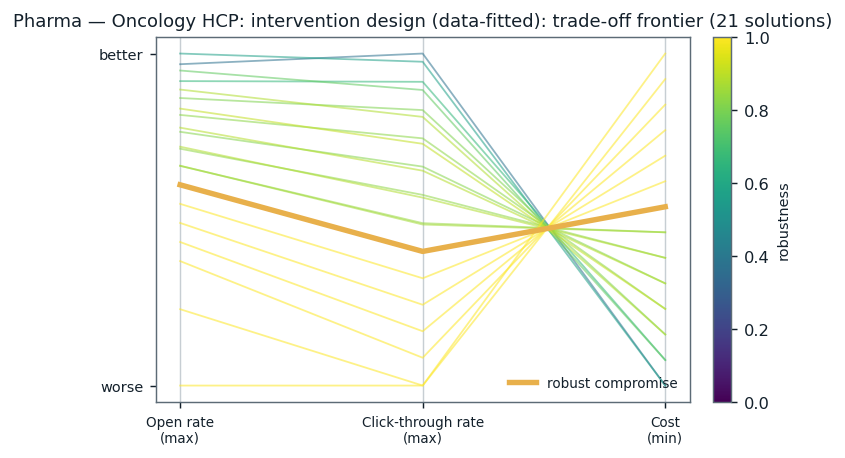}\caption{oncology email (exact MO-LP)}\end{subfigure}\hfill
\begin{subfigure}{0.32\textwidth}\includegraphics[width=\textwidth]{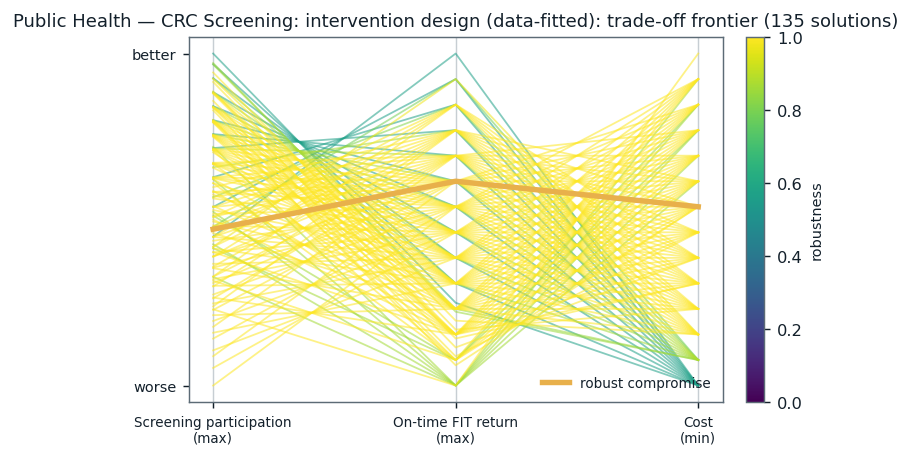}\caption{CRC screening (exact MO-LP)}\end{subfigure}\hfill
\begin{subfigure}{0.32\textwidth}\includegraphics[width=\textwidth]{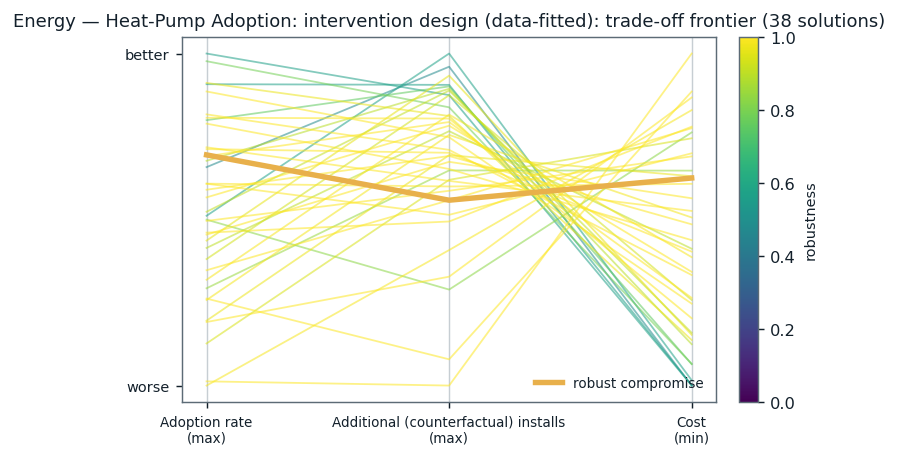}\caption{heat-pump (NSGA-III)}\end{subfigure}
\caption{Three diverse scenarios synthesized by SRJANA and solved by YUKTI.
Parallel-coordinate design-stage frontiers coloured by ARPF robustness; the robust
compromise is in gold. The pipeline is identical across domains; only the routed
solver and the web-anchored data differ.}
\label{fig:scenarios}
\end{figure*}

\section{Evaluation}
\label{sec:evaluation}
We evaluate the \emph{system}, not a benchmark leaderboard: for each scenario we
report (i) whether SRJANA's data foundation reproduces its web anchors
(faithfulness), (ii) the quality and spread of the discovered frontier
(normalized hypervolume and size), (iii) how sharply ARPF separates fragile from
robust solutions, (iv) the response-surface fit consumed by the optimizer, and
(v) wall-clock cost. Two quantities drive the evaluation. The ARPF robustness of
a candidate $x$ is the Monte-Carlo probability it survives its own assumptions,
\begin{equation}
\rho(x)=\frac1K\sum_{k=1}^{K}\mathbf 1\!\big[g(x;\theta^{(k)})\le 0\big]\,
\mathbf 1\!\big[x\in\mathrm{ND}(\mathcal P;\theta^{(k)})\big],
\label{eq:rho}
\end{equation}
where $\mathrm{ND}(\cdot)$ is the non-dominated set of the discovered pool
$\mathcal P$ re-ranked under resampled coefficients $\theta^{(k)}\sim\Theta$
(standard error
$\sqrt{\rho(1-\rho)/K}$). Frontier quality is the hypervolume of the min-sense
front $P$ after per-objective normalization to $[0,1]$,
\begin{equation}
\mathrm{HV}(P)=\Lambda\!\Big(\textstyle\bigcup_{p\in P}[\,p,\,r\,]\Big),\qquad r=1.1\cdot\mathbf 1 .
\label{eq:hv}
\end{equation}
Table~\ref{tab:eval} reports both across the three scenarios (verified run, $K{=}160$,
single core). SRJANA reproduces the benchmark \emph{marginals} to within $1\%$ in
every case; the larger residuals in the table are on the harder \emph{effect-ratio}
anchors (e.g.\ a clinical-content click lift), still within $\sim\!9\%$. The router
sends each design stage to the solver its structure warrants---two affine stages to
the exact $\varepsilon$-constraint solver (the screening design yields a 135-point
exact frontier) and the nonconvex heat-pump design to NSGA-III---and ARPF assigns a
wide robustness range in every case ($\rho$ as low as $0.37$ for aggressive
peripheral solutions), so the selected compromise ($\rho^\star$) is a genuine
choice, not an artifact of a flat frontier. End-to-end cost is $\approx\!14$\,s per
scenario including synthesis, analytics, both solves, and ARPF.

\begin{table*}[t]
\centering\small
\caption{System evaluation across three diverse scenarios (one pipeline, verified
run). HV is normalized hypervolume (Eq.~\ref{eq:hv}); $\rho^\star$ is the robustness
of the selected compromise and the range spans the whole frontier (Eq.~\ref{eq:rho});
faithfulness is the max relative error over benchmark anchors (marginals are
$<1\%$); $R^2$ is the primary response-surface fit; time is single-core wall clock.}
\label{tab:eval}
\setlength{\tabcolsep}{4pt}
\resizebox{\textwidth}{!}{%
\begin{tabular}{@{}llccccccccc@{}}
\toprule
Scenario & Domain & Obj. & Solver & Front & HV & $\rho^\star$ & $\rho$ range & Faithf.\ err & $R^2$ & Time (s)\\
\midrule
Oncology email & Pharma commercial & 3 & MO-LP (exact)   & 21  & 0.547 & 0.95 & 0.37--0.95 & 8.4\% & 0.82 & 14.9\\
CRC screening  & Public health     & 3 & MO-LP (exact)   & 135 & 0.660 & 1.00 & 0.56--1.00 & 2.9\% & 0.86 & 13.5\\
Heat-pump      & Energy            & 3 & MO-EA (NSGA-III) & 38  & 0.708 & 1.00 & 0.47--1.00 & 8.9\% & 0.73 & 13.5\\
\bottomrule
\end{tabular}}
\end{table*}

\section{Decision Traceability: Attribution and Shadow Prices}
\label{sec:trace}
A robustness score says how far a recommendation can be trusted; two further
questions decide whether it can be \emph{acted on}. ``Which segments are in this
action, and which should we prioritize?'' and ``Which constraint binds, and what is
relaxing it worth?'' We answer both for \emph{any} candidate---in particular the
robust compromise---so the object remains a trade-off choice, not a claimed single
optimum.

\textbf{Segment attribution.} Given the recommended lever setting $x^\star$, we fit
the primary outcome on the levers with segment fixed effects (a linear-probability
model $y\sim \beta^\top x + \sum_s \gamma_s\mathbf 1[\text{seg}=s]$), predict each
segment's response $p_s(x^\star)$, and weight by segment share $w_s$ to get its
share of the total engaged response, $\text{contrib}_s \propto w_s\,p_s(x^\star)$,
plus a priority rank by responsiveness. This exposes a distinction a narrative
hides: the most \emph{responsive} segment is often not the largest
\emph{contributor}. In the oncology case the recommended action's biggest
contributor is the mid-size, mid-engagement segment ($35\%$ of HCPs, $58\%$ of the
engaged response), while the smallest, most digital segment is the top \emph{priority}
by responsiveness but only $20\%$ of volume (Fig.~\ref{fig:trace}, left)---exactly
the ``which HCPs are in the $40\%$?'' question made answerable.

\textbf{Shadow prices and binding constraints.} For a candidate $x^\star$ we report
each constraint's slack (binding when $\approx 0$) and a shadow price: the marginal
change in the best achievable primary outcome per unit of relaxation,
\begin{equation}
\begin{aligned}
\pi_j&=\frac{\partial}{\partial b_j}\,V(b),\\
V(b)&=\max_{x}\,\text{primary}(x)\ \ \text{s.t.}\ \ g(x;b)\le 0,\ x\in[\ell,h],
\end{aligned}
\label{eq:shadow}
\end{equation}
estimated by perturbing $b_j$ and re-solving $V$. Where the stage is affine this is
the exact LP dual; otherwise it is a local empirical sensitivity (flagged as such).
A revealing pattern recurs: at the \emph{robust compromise} the budget is \emph{not}
binding---the compromise deliberately leaves spend on the table to stay robust---yet
budget \emph{does} bind at the reach-maximizing action, where its shadow price
($+0.004$ to $+0.007$ primary per cost unit, Table~\ref{tab:trace}) quantifies what
chasing maximum reach would cost in robustness. The energy case is sharper still:
the optimizer drives rebate depth toward zero and spends on installer coverage and
awareness instead---the model has learned the additionality lesson (salience and
targeting beat raw subsidy), and attribution shows the spend concentrating on the
most responsive, lower-income segment.

\begin{table}[t]
\centering\small
\caption{Decision traceability for the robust compromise (verified run).
``Priority'' is the most responsive segment; ``contributor'' is the largest share
of the engaged response. Budget is slack at the compromise but binds at maximum
reach, where the shadow price $\pi$ is the marginal primary gain per cost unit.}
\label{tab:trace}
\setlength{\tabcolsep}{5pt}
\resizebox{\columnwidth}{!}{%
\begin{tabular}{@{}lccc@{}}
\toprule
Scenario & Priority / contributor & Budget & Shadow $\pi$\\
\midrule
Oncology email & seg (resp) / seg ($58\%$) & slack & $+0.0036$\\
CRC screening  & seg / seg ($41\%$)         & slack & $+0.0074$\\
Heat-pump      & seg (LMI) / seg ($46\%$)   & slack & $+0.0017$\\
\bottomrule
\end{tabular}}
\end{table}

\begin{figure*}[t]
\centering
\begin{subfigure}{0.32\textwidth}\includegraphics[width=\textwidth]{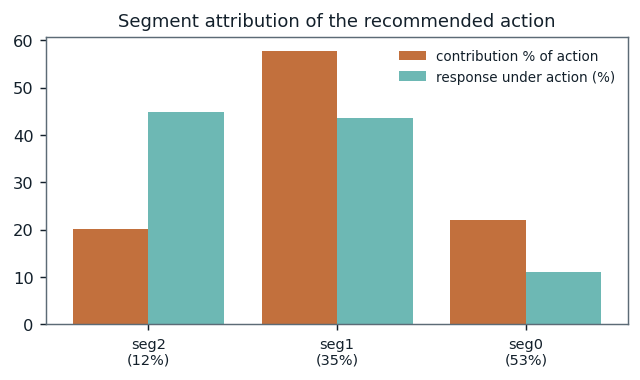}\caption{oncology email}\end{subfigure}\hfill
\begin{subfigure}{0.32\textwidth}\includegraphics[width=\textwidth]{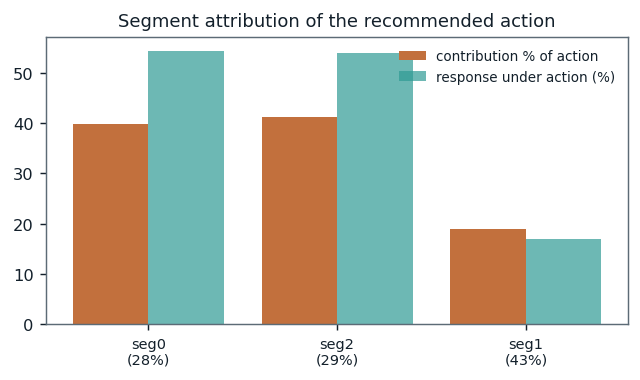}\caption{CRC screening}\end{subfigure}\hfill
\begin{subfigure}{0.32\textwidth}\includegraphics[width=\textwidth]{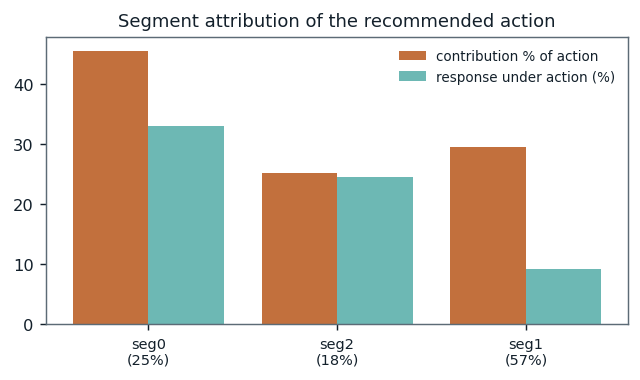}\caption{heat-pump}\end{subfigure}
\caption{Segment attribution of the recommended robust action: each segment's
contribution to the engaged response (copper) against its response rate under the
action (teal), with segment share in parentheses. The biggest contributor and the
most responsive segment frequently differ---the decision-traceability detail a
narrative answer omits.}
\label{fig:trace}
\end{figure*}

\section{External Validation under Misspecification}
\label{sec:validation}
The evaluation in \S\ref{sec:evaluation} is a system check, not a test of the
central claim---that carrying assumption uncertainty yields decisions that survive
being wrong. We test that claim out-of-sample, non-circularly, against a held-out
ground truth.

\textbf{Setup.} We model the real epistemic situation. Each lever's \emph{marginal}
effect is known from a \emph{separate} randomized trial---mailed opt-out FIT
$\approx{+}0.15$, SMS reminder $\approx{+}0.08$, navigation $\approx{+}0.10$
absolute on screening completion, anchored to published RCTs~\cite{mtics,mehta,jgim}---but
\emph{no} trial measured the levers in combination. The optimizer's surrogate is
therefore \emph{additive} in the known marginals, with no interaction term. The
\emph{true} world, revealed only at evaluation, is structurally richer: negative
interactions and saturation (diminishing returns to stacking channels) that the
additive surrogate cannot represent. The optimizer never sees the true structure,
so the test is non-circular---fit on additive marginals, evaluate on the richer
world.

\textbf{Policies.} (a) \emph{naive point}: the single-objective argmax of the
additive surrogate---the confident point plan a one-shot LLM emits;
(b) \emph{nominal}: a multi-objective compromise on the surrogate, no robustness;
(c) \emph{ARPF robust}: the robust compromise over an $\varepsilon$-contamination
$\Theta$ ($\varepsilon{=}0.5$) that hypothesizes diminishing-returns interactions,
selecting the action with the best downside (worst-$25\%$) value.

\textbf{Result.} On the held-out true world over $800$ realizations
(Table~\ref{tab:validation}), the naive plan stacks every lever and is punished by
the unmodelled saturation; the robust compromise uses the single strongest real
lever and refuses to stack. ARPF cuts mean regret by $98\%$ and tail regret
(CVaR$_{10\%}$) by $92\%$ versus the naive plan, and beats the non-robust nominal
compromise as well (Fig.~\ref{fig:validation}a). The Theorem~\ref{thm:arpf}
certificate holds (Fig.~\ref{fig:validation}b). The \emph{distributional hand-off}
matters too: handing off the naive plan's stage-1 signal as a point reports the
downstream floor satisfied ($0.58\ge0.55$), while handing off its distribution
prices a $60\%$ probability of falling below that floor (10th percentile $0.21$)---
the risk a point commitment hides.

\begin{table}[t]\centering\small
\caption{Held-out regret on the true world ($800$ realizations). The naive additive
plan stacks all levers; ARPF's robust compromise uses the strongest single lever.
Lower is better.}
\label{tab:validation}
\setlength{\tabcolsep}{5pt}
\resizebox{\columnwidth}{!}{%
\begin{tabular}{@{}lcccc@{}}
\toprule
Policy & Action & Mean & CVaR$_{10\%}$ & vs naive\\
\midrule
naive point & $(1,1,1)$ & $1.091$ & $1.471$ & ---\\
nominal & $(1,0.88,0)$ & $0.407$ & $0.627$ & $-63\%$\\
ARPF robust & $(0,1,0)$ & $\mathbf{0.021}$ & $\mathbf{0.114}$ & $\mathbf{-98\%}$\\
\bottomrule
\end{tabular}}
\end{table}

\begin{figure}[t]\centering
\begin{subfigure}{0.49\columnwidth}\includegraphics[width=\textwidth]{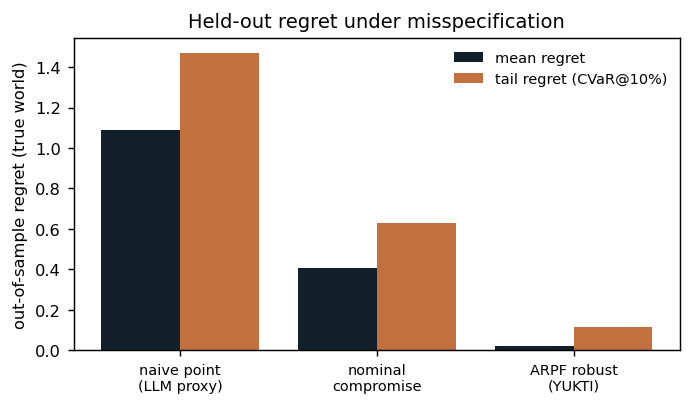}\caption{}\end{subfigure}\hfill
\begin{subfigure}{0.49\columnwidth}\includegraphics[width=\textwidth]{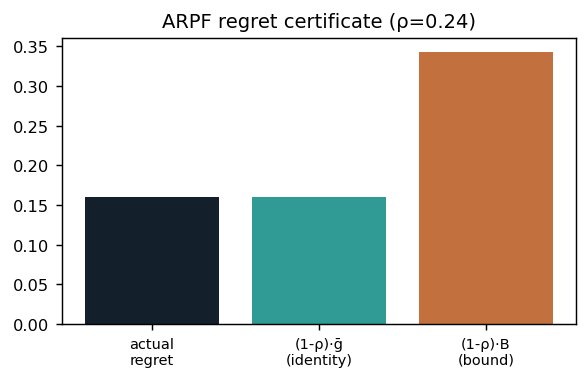}\caption{}\end{subfigure}
\caption{(a) Out-of-sample regret under structural misspecification: the ARPF
robust compromise collapses both mean and tail regret versus the naive point plan.
(b) Theorem~\ref{thm:arpf} certificate: actual pool-regret equals the identity
$(1-\rho)\bar g$ and lies under the bound $(1-\rho)B$.}
\label{fig:validation}
\end{figure}

\textbf{Scope (honest).} The true world is synthetic, but its marginal effects
equal measured RCT values and its structure is hidden from the optimizer. The
magnitude of the gap depends on interaction strength; the \emph{direction}---
robustness reduces out-of-sample and especially tail regret under
misspecification---is the claim, and it holds. Real field data would replace the
synthetic world wholesale; the methodology (fit a misspecified surrogate, evaluate
held-out, report regret) transfers unchanged.

\section{Validation on Real Data}
\label{sec:realdata}
The validations so far use worlds we generated. The sharpest test is data we did
not: the UCI Bank Marketing dataset~\cite{moro}---$41{,}188$ real telemarketing
contacts from a Portuguese bank, each a logged channel/frequency decision with a
realized conversion outcome. The decision mirrors the commercial brief in miniature:
per client segment (job $\times$ age band), choose a \emph{channel}
(cellular/telephone) and a \emph{contact-frequency} band to maximize conversion.

\textbf{Protocol.} We estimate each (segment, action) conversion on a training split
and evaluate three rules on a held-out split (mean of $12$ random $60/40$ splits):
\emph{logged} (the realized status-quo mix, no re-assignment); \emph{naive} (per
segment, argmax of the point estimate); and \emph{robust} (per segment, argmax of
the downside CVaR$_{25\%}$ of the Beta posterior---the one-dimensional instance of
YUKTI's assumption-robust compromise). An in-hindsight \emph{oracle} bounds the
achievable. Naive selection is exactly the \emph{optimizer's curse}
setting~\cite{smithwinkler}: an argmax over noisy estimates favours options that look
best by chance and regress out-of-sample.

\textbf{Result.} On held-out real data the decision layer beats the status quo
decisively---robust conversion $0.152$ vs logged $0.114$, \textbf{$+34\%$}---and
robustness also beats the naive point rule by \textbf{$+4\%$} ($0.152$ vs $0.146$),
winning in $97\%$ of segments and shrinking the gap to the oracle ($0.022$ vs
$0.028$, Table~\ref{tab:realdata}, Fig.~\ref{fig:realdata}). The mechanism is visible
in the \emph{post-decision surprise}---how much a rule's training estimate overstates
its realized value: naive $+2.6$pp versus robust $+2.0$pp. The naive plan believes
its own luck; the robust plan discounts it.

\begin{table}[t]\centering\small
\caption{Out-of-sample backtest on UCI Bank Marketing ($41{,}188$ real decisions;
mean of $12$ random $60/40$ holdouts, $25$ segments). Held-out conversion; higher is
better.}
\label{tab:realdata}
\setlength{\tabcolsep}{5pt}
\resizebox{\columnwidth}{!}{%
\begin{tabular}{@{}lccc@{}}
\toprule
Rule & Held-out conv. & vs logged & Surprise (pp)\\
\midrule
logged (status quo) & $0.114$ & --- & ---\\
naive point & $0.146$ & $+29\%$ & $+2.6$\\
ARPF robust & $\mathbf{0.152}$ & $\mathbf{+34\%}$ & $+2.0$\\
oracle (hindsight) & $0.174$ & $+53\%$ & ---\\
\bottomrule
\end{tabular}}
\end{table}

\begin{figure}[t]\centering
\includegraphics[width=0.92\columnwidth]{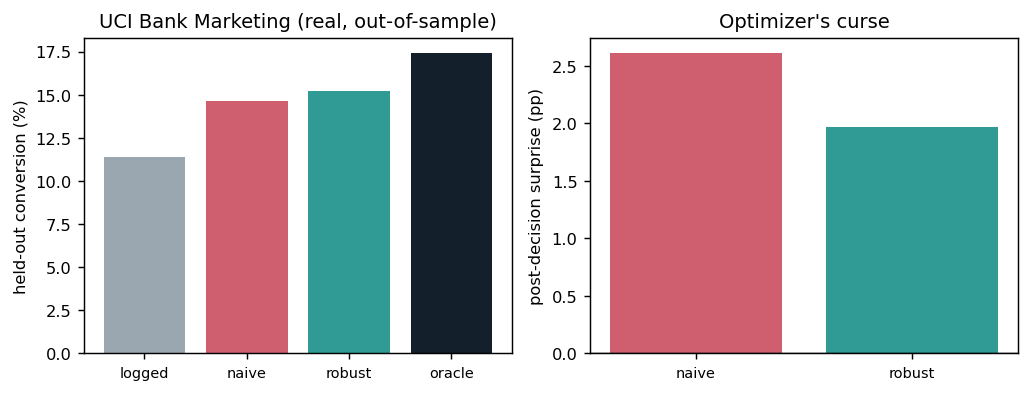}
\caption{Real-data, out-of-sample (UCI Bank Marketing). Left: held-out conversion---
the robust rule beats both the logged status quo and the naive point rule. Right: the
optimizer's curse---naive selection overstates its own pick more than robust does.}
\label{fig:realdata}
\end{figure}

\textbf{Non-stationarity (honest stress).} Under a strict \emph{out-of-time} split
(the rows are date-ordered), conversion roughly triples across periods; absolute cell
rates no longer transfer and all optimized gains compress (robust $0.214$ vs logged
$0.247$). This is the known failure mode of backtests on a drifting world, and we
state it plainly: the random-holdout result isolates the estimation-uncertainty
effect the theory addresses, while closing the non-stationarity gap needs a
recency-weighted or recourse formulation (\S\ref{sec:limitations}). Evaluation is
direct-method (held-out cell rates); a production study adds a test--control design
(Appendix~\ref{app:backtest}). Even so, this is the first result here on data the
author did not generate, and it points the same way as the theory: pricing
uncertainty beats ignoring it.

\section{Deploying in a Regulated Commercial Setting}
\label{sec:pharma}
A held-out regret win is necessary but not sufficient for a high-stakes commercial
decision. Taking an oncology channel-rebalance brief (Drug~A and Drug~B, during
loss of exclusivity) to a country team surfaces four requirements a generic
optimizer ignores; YUKTI's typed IR accommodates each as a formulation choice, and
we backtest the result against the policy that was actually run.

\textbf{A lawful, admissible action space.} Email cannot reach an oncologist who has
not opted in: under GDPR only a consented fraction $\kappa$ is addressable
($\kappa{\approx}0.45$ here), so email effect is reach-capped, not free. A frequency
cap bounds touches; a finite inventory of MLR-approved assets bounds content
($\text{depth}+\text{defense}\le A$); and a \emph{KAM override} keeps field contact
on top-decile KOLs above a floor (human-in-the-loop). These enter as constraints, so
optimization happens only inside the admissible set (Appendix~\ref{app:mlr}). This is
not cosmetic: the unconstrained ``optimal'' plan---max email, max content---is
\emph{inadmissible} (it breaches the asset and frequency caps), and only $312$ of
$1296$ grid actions are lawful. An optimum outside the lawful set is a liability, not
an answer.

\textbf{A prescribing-proxy objective, not opens.} The primary outcome is an
NBRx-uplift proxy in which the field relationship dominates for high-value
oncologists and email is a consent-capped, fatigue-prone, \emph{mediated}
contributor; opens/CTR feed an engagement-quality term but are never the objective.
Message is not a scalar ``content'' dial: Drug~A rewards scientific-depth content
(a targeted line-of-therapy), Drug~B rewards LoE-defense content, so the recommended
content architecture differs by brand by construction.

\textbf{Out-of-time backtest against the logged policy.} We fit the surrogate on past
quarters, freeze it, and evaluate each policy's action on held-out future quarters of
a true world it never saw, comparing realized NBRx against the status quo that was
actually run (Appendix~\ref{app:backtest}). Table~\ref{tab:pharma} reports the result.
The robust compromise keeps field (KAM preserved), \emph{declines} the aggressive
email shift the brief proposed, optimizes content to each brand, and---crucially---
neutralizes the downside (5\% Value-at-Risk $\approx 0$) where the naive plan carries
a large negative tail. The \emph{nominal} compromise that trims expensive field
\emph{underperforms the status quo} (Fig.~\ref{fig:pharma}): naively rebalancing away
from the relationship loses prescriptions to channel substitution---the failure the
field organization predicts.

\begin{table}[t]\centering\small
\caption{Out-of-time backtest vs the logged status quo (Drug~A): NBRx uplift
\emph{relative to logged} and 5\% revenue Value-at-Risk in \eur M (illustrative
oncology unit economics, Appendix~\ref{app:var}). The unconstrained optimum is
inadmissible (breaches MLR/frequency caps).}
\label{tab:pharma}
\setlength{\tabcolsep}{5pt}
\resizebox{\columnwidth}{!}{%
\begin{tabular}{@{}lccc@{}}
\toprule
Policy & Action $(f,e,d,v)$ & NBRx & rev-VaR (\eur M)\\
\midrule
logged (status quo) & $(.85,.35,.45,.40)$ & --- & $-10.0$\\
naive (unconstr.) & $(1,1,1,0)$ & \emph{inadmiss.} & ---\\
naive (admissible) & $(1,.6,1,0)$ & $+15\%$ & $+1.7$\\
nominal & $(.6,.6,1,0)$ & $-8\%$ & $-17.2$\\
ARPF robust & $(1,0,.8,0)$ & $\mathbf{+12\%}$ & $-0.3$\\
\bottomrule
\end{tabular}}
\end{table}

\begin{figure}[t]\centering
\includegraphics[width=0.9\columnwidth]{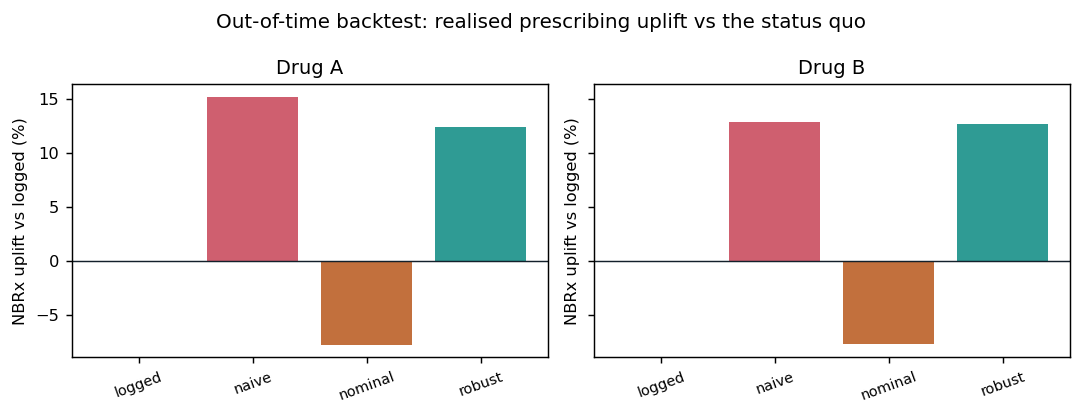}
\caption{Out-of-time realized NBRx uplift vs the logged status quo, by brand. The
robust compromise gains modestly and protects the downside; the nominal field-trim
loses to channel substitution; the recommended content differs by brand (Drug~A
depth, Drug~B defense).}
\label{fig:pharma}
\end{figure}

\textbf{Decision economics (euros, not $\rho$).} The NBRx-uplift distribution maps to
revenue and share-of-voice with a downside band: the robust plan's expected
incremental revenue is positive with 5\% VaR near zero, whereas the field-trimming
nominal plan risks an eight-figure shortfall (Table~\ref{tab:pharma}). A country GM
transacts in euros-at-risk and accountability, not value units;
Appendix~\ref{app:var} gives the mapping.

\textbf{Deployment posture.} The honest role is a decision \emph{stress-testing and
triangulation} layer, not a system of record: it pressure-tests a human plan, prices
its downside, and flags the lawful action space---then writes its recommendation and
decision-traceability into the validated commercial stack (e.g.\ as a prior or
guardrail for a Veeva next-best-action engine), where a rep sees it in-flow and the
KAM override is always available. The synthetic data here is benchmark- and
RCT-anchored; a real Veeva/IQVIA backtest is the one step that converts this from a
credible simulation into evidence, and the protocol is built for exactly that swap.

\ypart{IV}{Boundaries and Position}
\section{LLM Reasoning, Function Optimization, or YUKTI?}
\label{sec:comparison}
A fair objection: could a capable reasoning model (an LRM) just \emph{read} the brief
and answer, or does classical single-objective optimization already suffice? We test
both against YUKTI on the same held-out harness, separating \emph{what each method is
for} from \emph{what it can guarantee} (Table~\ref{tab:capability}).

\textbf{Capability.} The three occupy different rungs. An LRM excels at the
\emph{front end}---turning a worded situation into structured quantities---which is
precisely why YUKTI uses one there. Asked instead to \emph{be} the optimizer, an LRM
has no mechanism to guarantee feasibility, to enumerate a Pareto frontier without an
arbitrary scalarization, to propagate coefficient uncertainty, or to certify regret;
empirically, LLM-as-optimizer degrades sharply with problem size and was never meant
to beat specialized solvers~\cite{opro}, and LLM-emitted solutions are frequently
\emph{infeasible} in constrained tasks because the model describes a solution without
tracking the constraints~\cite{llmco}. Classical single-objective optimization
guarantees feasibility and a nominal optimum, but collapses conflicting objectives
into one weighted scalar and returns an uncertainty-free \emph{point}. YUKTI keeps the
LRM for intake and replaces the solve-and-hedge with machinery that has these
properties.

\textbf{Outcome.} On the misspecification problem (\S\ref{sec:validation}), evaluated
identically: an LRM handed the \emph{correct} marginal effects and reasoning
additively commits to the stacked plan (mean held-out regret $1.03$; because decoding
is stochastic, a run-to-run spread of $0.12$ with no stable optimum); classical
single-objective optimization lands at the same fragile corner ($1.07$); YUKTI's
robust compromise scores $0.022$---about $47\times$ lower. The lesson is not that the
LRM reasons badly (it is handed the right numbers) but that a \emph{point} plan,
however reasoned, ignores the assumption uncertainty and the objective conflict that
dominate the decision. The same ordering holds out-of-sample on real data
(\S\ref{sec:realdata}: the naive point rule is the LRM/classical proxy; robust beat it
by $4\%$ with a smaller optimizer's-curse surprise), and in the regulated setting
(\S\ref{sec:pharma}) the confident LRM plan---``shift hard to digital, maximize
content''---is not merely fragile but \emph{inadmissible}: it breaches the MLR and
consent caps, the infeasibility mode~\cite{llmco} made concrete in euros.

\begin{table*}[t]\centering\small
\caption{Three ways to turn a worded decision into a plan. An LRM is the best
\emph{formulator} but cannot guarantee what a high-stakes decision needs; classical
single-objective optimization guarantees feasibility and a nominal optimum but returns
an uncertainty-free point; YUKTI uses the LRM for intake and supplies the rest. Lower
block: identical held-out evaluation (\S\ref{sec:validation}--\ref{sec:pharma}).
\checkmark~= provides; $\sim$~= partial; $\times$~= absent.}
\label{tab:capability}
\setlength{\tabcolsep}{6pt}
\resizebox{\textwidth}{!}{%
\begin{tabular}{@{}lccc@{}}
\toprule
& LRM (reasoning) & Single-obj.\ optimization & \textbf{YUKTI}\\
\midrule
NL situation $\rightarrow$ structured model & \checkmark\ (strong) & $\times$\ (manual) & \checkmark\ (LRM front-end)\\
Feasibility guarantee & $\times$~\cite{llmco} & \checkmark & \checkmark\\
Exact / numerical solve & $\sim$\ (small; degrades~\cite{opro}) & \checkmark & \checkmark\ (routed)\\
Pareto frontier, no forced scalarization & $\times$ & $\times$ & \checkmark\\
Coefficient-uncertainty propagation & $\times$ & $\times$ & \checkmark\ (typed IR)\\
Assumption-robustness / regret certificate & $\times$ & $\times$ & \checkmark\ (Thm~\ref{thm:arpf})\\
Decision traceability (attribution, duals) & $\sim$\ (narrative) & $\sim$\ (duals) & \checkmark\\
Deterministic / reproducible & $\times$ & \checkmark & \checkmark\\
\midrule
Held-out regret, misspecification (\S\ref{sec:validation}) & $1.03$ & $1.07$ & $\mathbf{0.022}$\\
Run-to-run instability & $0.12$ & $0$ & $0$\\
Lawful in regulated setting (\S\ref{sec:pharma}) & $\times$\ (inadmissible) & \checkmark & \checkmark\\
\bottomrule
\end{tabular}}
\end{table*}

\textbf{Perspective.} The boundary is clean. Use an LRM for what it is---a superb
\emph{formulator} of messy situations into structure. Do not ask it to be the
\emph{solver}: feasibility, frontiers, uncertainty, and a regret certificate are not
properties a next-token predictor provides, and ``reasoning'' tokens improve the
\emph{formulation}, not the optimization. That division of labour---LRM in front,
verifiable optimization behind---is YUKTI, and it is the constructive form of the
mimicry-of-computation thesis.

\section{Long-Range Causal Dependence: Where the Hand-off Must Become a Policy}
\label{sec:causal}
YUKTI solves stages in topological order and passes a signal forward
(\S\ref{sec:multistage}). That is adequate when an early action only changes the
\emph{state} the next stage sees; it is \emph{not} adequate when an early action
causally reshapes a later stage's \emph{mechanism}---a long-range dependency
$a_1\!\rightarrow\!\text{mechanism}_2$, not $a_1\!\rightarrow\!\text{state}_2$. Such
dependence is the rule in the settings these methods target: an early therapy
changes how a later one acts; an early brand investment changes the elasticity of a
later channel. The field is converging here---commercial causal-decision platforms
pair an LLM front-end with causal graphs and intervention/recourse search (causaLens
\emph{decisionOS}; Alembic's real-time causal recomputation of saturation and lift),
and the long-horizon LLM-agent literature is consumed by \emph{credit
assignment}~\cite{hcapo}. The rigorous formalism is the dynamic-treatment-regime
(DTR) line, and it warns about exactly our hand-off: when late-stage outcome models
are misspecified and plugged into early-stage optimization, \emph{bias accumulates
from stage to stage and grows with the horizon}~\cite{huangning}, so the correct
object is a \emph{global / backward-induction} policy, not a forward myopic
pass~\cite{murphy,robins}.

We make this testable. On a controlled causal chain where an early action $a_1$
modulates stage-2 effectiveness with coupling strength $\kappa$ (so
$a_1\!\rightarrow\!\text{mechanism}_2$), we compare YUKTI's myopic hand-off against a
backward-induction \emph{causal-trajectory} policy over the same robust (ARPF/CVaR)
objective (Fig.~\ref{fig:causal}).

\textbf{(C1) Myopia penalty.} At $\kappa{=}0$ the two coincide---YUKTI's present
hand-off is the no-coupling special case. As coupling grows, the myopic pass (which
optimizes stage~1 on its own value and under-invests in the early action) incurs
trajectory regret growing roughly linearly in $\kappa$, exceeding $1.0$ value unit at
strong coupling, while the causal-trajectory policy holds regret near zero. The DTR
bias-accumulation result, reproduced.

\textbf{(C2) The certificate does not compose.} Theorem~\ref{thm:arpf} certifies
\emph{per-decision} pool-regret. But the stage-wise product $\rho_1\rho_2$ that a
forward pass reports \emph{overstates} the myopic plan's true trajectory robustness
$\rho_{\mathrm{traj}}$, and the optimism gap grows with $\kappa$: at strong coupling
the per-stage certificate rates the myopic plan ${\sim}0.96$ robust while its
probability of being trajectory-optimal is ${\approx}0$. The certificate is sound per
decision and \emph{unsound} across a causally-coupled trajectory.

\begin{figure}[t]\centering
\includegraphics[width=\columnwidth]{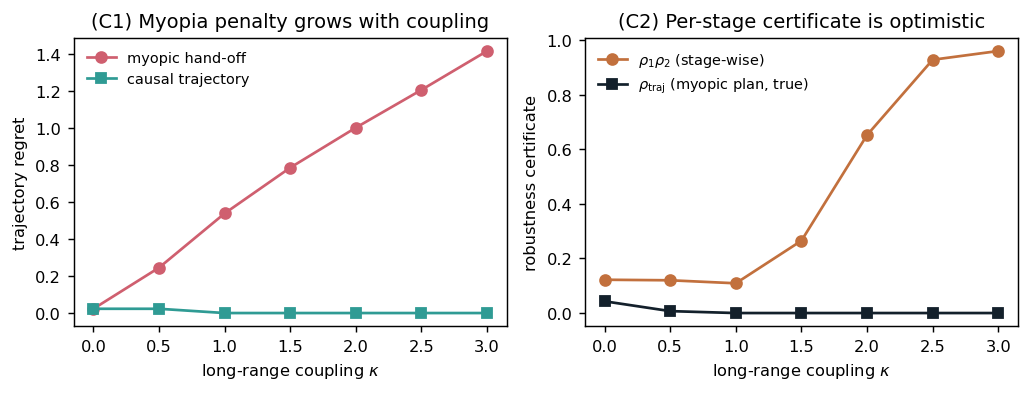}
\caption{Long-range causal coupling $\kappa$ ($a_1\!\rightarrow\!\text{mechanism}_2$).
Left (C1): the myopic hand-off's trajectory regret grows with coupling while the
backward-induction causal-trajectory policy stays near zero; at $\kappa{=}0$ they
coincide. Right (C2): the stage-wise certificate $\rho_1\rho_2$ stays high while the
myopic plan's true trajectory robustness $\rho_{\mathrm{traj}}$ collapses---the
per-stage certificate is increasingly optimistic.}
\label{fig:causal}
\end{figure}

\textbf{What must change.} Three modifications follow, each a concrete extension of an
existing component. (i) \emph{Typed-causal IR}: the Typed Proposition graph gains
temporal causal edges typed by lag and mechanism (does $a_1$ shift a later state or a
later coefficient?), with time-varying confounding represented rather than assumed
away~\cite{robins}. (ii) \emph{Backward-induction robust policy}: where coupling is
present the forward hand-off is replaced by a backward-induction solve over the
trajectory---robust $Q$-/structural-nested style~\cite{robustdtr}---with ARPF
resampling the causal \emph{model} (structure and coupling), not only per-stage
coefficients. (iii) \emph{Trajectory certificate and causal credit}: $\rho$ and the
regret bound move to the trajectory, and decision-traceability (\S\ref{sec:trace})
generalizes to \emph{causal credit assignment}---path-specific contributions of early
actions to late outcomes---precisely the problem the long-horizon agent literature is
attacking~\cite{hcapo}.

\textbf{Scope.} We diagnose and bound the failure and verify the direction; we do not
ship a causal-DP solver here. The $\kappa{=}0$ result confirms the present system is
correct in the regime it was built for; the $\kappa{>}0$ results say what must change,
and that change is well-trodden in the DTR and causal-RL literature~\cite{causalrl}---
an identification-and-engineering problem, not a conceptual gap.

\section{Positioning and Novelty}
\label{sec:positioning}
We separate what we reuse from what we claim.
\textbf{Established (reused):} the five-element formulation discipline and
IR-plus-solver loop~\cite{llmopt,optimus,orpilot}; augmented
$\varepsilon$-constraint / AUGMECON2~\cite{augmecon2}; NSGA-II/III, MOEA/D,
Das--Dennis directions, and pymoo's MCDM~\cite{nsga2,nsga3,moead,dasdennis,pymoo};
HiGHS and SciPy~\cite{highs,scipy}; the notion of integrating over frontier
uncertainty from MOBO~\cite{qnehvi}.
\textbf{Contributed (new, to our knowledge):}
(1) a \emph{Typed Proposition IR} that carries shape priors, a coefficient
uncertainty distribution, and provenance---versus the point-valued IRs of prior
autoformulation systems;
(2) \emph{ARPF}, propagating that IR-level assumption uncertainty onto the
frontier to produce per-decision robustness scores and robust compromise
selection inside an autoformulation pipeline---the model-assumption analogue of
qNEHVI's observation-noise treatment;
(3) a \emph{heterogeneous multi-stage hand-off} that couples exact and
evolutionary stages across a decision DAG; and
(4) a \emph{numeric structure-aware router} that picks exact / NLP / EA per stage
by probing affinity and convexity on the relaxation;
(5) \emph{SRJANA}, benchmark-anchored contextual synthesis that manufactures a
faithful data foundation when none exists (the synthesized analytics layer); and
(6) \emph{decision-traceability} reporting---segment attribution and
shadow-price/binding-constraint analysis---that makes a recommended action
auditable.
The synthesis as a whole---worded situation to multi-stage, multi-objective,
assumption-robust recommendation---is, to our knowledge, not available in one
system. We emphasize again: \emph{no benchmark-SOTA claim is made}; evaluation
here is a verified demonstration, not a comparison.

\textbf{Relation to decision-support systems.} The three-layer reading of our
pipeline---analytics, decision/model, and a presentation layer---is deliberately
classical: it mirrors the data/model/dialog decomposition of decision-support
systems~\cite{sprague}. We claim no novelty in the \emph{layering}; the
contribution is the \emph{mechanisms} that make the decision layer trustworthy
under LLM-objectified inputs (typed-uncertain IR, ARPF, the router, the hand-off,
SRJANA, and decision-traceability reporting). Framed against that lineage, the
problem an LLM introduces is \emph{mimicry of computation}: fluent output with the
form of an optimized recommendation but without a solved model, stated
coefficients, sensitivity, or robustness. Source traceability (RAG provenance) does
not address this; decision traceability does. YUKTI's value is therefore not that
an LLM ``cannot'' do the analysis---a capable model can---but that the
recommendation it emits arrives with an explicit, re-runnable, assumption-aware
artifact rather than as unverifiable prose.

\section{Limitations}
\label{sec:limitations}
\textbf{Illustrative coefficients.} The demonstration's numbers are elicited
placeholders, not measured field data; the contribution is structural. The
external validation (\S\ref{sec:validation}) anchors a held-out world to measured
RCT effects and shows the robust compromise reduces out-of-sample regret, but real
proprietary data would replace the synthetic world entirely; ARPF exists to
quantify the exposure, and the tornados say which numbers to measure first.
\textbf{ARPF re-ranks a fixed pool.} Equation~\ref{eq:arpf} resamples assumptions
and re-ranks the \emph{discovered} solutions rather than re-searching from
scratch each draw. This is exactly why Theorem~\ref{thm:arpf} certifies
\emph{pool}-regret (against the best-in-pool) rather than oracle regret; it is
efficient and adequate when the nominal pool is dense near the frontier, but can
understate robustness of decisions far from the nominal optimum. Periodic full
re-searches are a configurable, costlier option.
\textbf{Hand-off, not recourse.} The multi-stage layer commits a compromise and
passes its signals forward; it does not optimize a recourse policy over a scenario
tree as in multistage stochastic programming~\cite{birgebook,shapiro}. This is
appropriate for hierarchical strategy decisions but is not a substitute for
recourse when later stages can genuinely revise earlier ones.
\textbf{Router heuristics.} Affinity/convexity are inferred from finite samples
with tolerances; pathological functions can be misread. Misrouting is
conservative (EA solves a superset of what exact handles) but may forgo exactness.
\textbf{LLM front-end.} This paper specifies and demonstrates the engine; wiring a
production LLM elicitor with self-repair~\cite{optirepair} on top of the typed IR
is engineering we describe but do not benchmark here.
\textbf{Real data and non-stationarity.} The real-data backtest (\S\ref{sec:realdata})
is genuine out-of-sample evidence on data we did not generate, but under a strict
out-of-time split a tripling of the base rate compresses the gains: a drifting world
defeats any frozen response surface, ours included. The honest scope is that pricing
\emph{estimation} uncertainty helps out-of-sample; pricing \emph{distribution shift}
needs a recency-weighted or recourse extension. Evaluation there is direct-method,
not a randomized test--control study.

\section{Future Work}
Natural extensions: (i) an expensive-evaluation mode using qNEHVI when each
proposition evaluation is a simulation rather than a formula~\cite{qnehvi}; (ii)
true multistage recourse with a scenario tree built from the proposition
distributions, recovering the stochastic-programming view~\cite{birgebook};
(iii) calibrating propositions against data so provenance can graduate from
\emph{assumed} to \emph{benchmark}; and (iv) a benchmark of formulation fidelity
and robustness against single-shot LLM baselines on multi-objective instances.

\section{Conclusion}
YUKTI keeps the LLM in the role it does well---reading a situation and naming its
structure---but changes what it produces: not a brittle point model with one
objective, but a \emph{typed, uncertain} proposition graph whose stages are routed
to the right solver, chained by Pareto hand-off, and ranked by how often each
choice survives the very assumptions that were invented to objectify the brief.
The result is a recommendation that comes with its own statement of fragility.
We provide a working implementation and an end-to-end demonstration, and we are
explicit that the novelty is the uncertainty-typed IR and the ARPF mechanism, not
the underlying solvers or any benchmark claim.

\appendix

\section{From Contextual Analysis to Formulation}
\label{app:contextual}
This appendix records the design trail: the core statements that emerged from
analysing real ``tell-me-what-to-do'' briefs, and how each was translated into a
concrete framework mechanism. The framework is not an arbitrary architecture; every
component answers an observation. Table~\ref{tab:contextual} is the compact map;
the prose below elaborates the reasoning.

\begin{table*}[t]
\centering\small
\caption{Contextual analysis $\rightarrow$ framework. Each core statement discovered
while reading real briefs becomes a specific mechanism. This is the derivation of
YUKTI's structure from first observations.}
\label{tab:contextual}
\renewcommand{\arraystretch}{1.15}
\begin{tabular}{@{}p{0.40\textwidth}p{0.27\textwidth}p{0.27\textwidth}@{}}
\toprule
\textbf{Core statement (observed)} & \textbf{Implication} & \textbf{Mechanism}\\
\midrule
``I know my levers, signals and limits, not variables, an objective and constraints.'' & The user speaks an intent vocabulary, not a solver vocabulary. & Elicitation to \emph{levers / signals / limits}; the Typed Proposition IR.\\
``Reach, cost and risk genuinely trade off.'' & A single weighted scalar hides the decision. & Pareto frontiers; no a-priori scalarization.\\
``To make it numeric, someone has to invent a coefficient.'' & Objectified numbers are assumptions, not facts. & Per-coefficient \emph{uncertainty distribution} $\Theta$ + provenance (given/assumed/benchmark).\\
``A plan that is right only if the guess is right is not actionable.'' & Optimality must be qualified by fragility. & ARPF: resample $\Theta$, score $\rho(x)$ (Eq.~\ref{eq:rho}); pick a robust compromise.\\
``First I design the program, then I decide how hard to roll it out.'' & Decisions are sequential; stage~1 fixes stage~2's playing field. & Multi-stage Pareto \emph{hand-off} across a decision DAG.\\
``Some pieces are linear, some saturate, some interact.'' & One solver cannot serve every stage. & Structure-aware router (affinity/convexity probes) $\rightarrow$ exact / NLP / EA.\\
``This is a fresh brief; there is no dataset yet.'' & Point-valued IRs have nothing to fit. & SRJANA: synthesize a contextual data foundation from spec + web anchors.\\
``The literature fixes both a level and a contrast.'' & Means alone under-determine the data. & Moment system: match marginal \emph{and} effect ratio (Eq.~\ref{eq:srjana-moments}).\\
``Synthetic data is only useful if it is trustworthy.'' & Must verify before optimizing. & Self-consistency gate: $100\%$ sign recovery; reject data that cannot reproduce its anchors.\\
``Real extracts are messy (open-rate inflation, consent gating).'' & Clean synthetic data misleads analytics. & Inject pathologies; prefer click-to-open over raw open rate.\\
``The laggards aren't sending less---they're targeting worse.'' & Gaps are capability gaps, not effort gaps. & Cross-group decomposition on the best-practice \emph{lever mix}.\\
``Much of the subsidy funds people who'd have acted anyway.'' & Headline uptake $\ne$ impact. & Separate \emph{adoption} from \emph{additional} (counterfactual) outcomes.\\
``Salience and targeting can beat raw spend.'' & The cheapest lever is not the weakest. & Driver analysis (importance + uplift) feeding the cost-aware optimizer.\\
``I have to defend this to a committee.'' & Recommendations must be auditable. & Provenance tags, tornado sensitivity, and a per-solution $\rho$.\\
\bottomrule
\end{tabular}
\end{table*}

\textbf{A.1 From words to a typed program.} Briefs name what is \emph{controlled}
(levers), what is \emph{cared about} (signals), and what must be \emph{respected}
(limits). We therefore do not ask the LLM for variables and an objective; we ask it
to populate this intent vocabulary, and we make each numeric relationship a
\emph{typed proposition} $\varphi(x,u;\theta)$ rather than a constant. The type
carries a shape prior (e.g.\ ``increasing, saturating''), a coefficient
distribution, and a provenance tag, so the very first artifact already encodes what
is known versus guessed.

\textbf{A.2 Conflict and assumptions are first-class.} Two observations dominate
real briefs: objectives conflict, and the numbers that make them comparable are
invented. The first forbids scalarization (we keep a frontier); the second forbids
treating any coefficient as ground truth (we attach $\Theta$ and, through ARPF,
report how often each candidate stays feasible and non-dominated under resampled
assumptions). Robustness is thus not a post-hoc check but the selection criterion.

\textbf{A.3 Sequence.} ``Design, then roll out'' is two coupled problems: the
roll-out's achievable share-of-voice depends on the engagement the design stage
secured. We model this as a Pareto hand-off---anchor signals from stage~1 enter
stage~2 as fixed inputs---so the second decision is optimized \emph{given} the
first, not in isolation.

\textbf{A.4 No data is the common case.} A genuinely new brief has no table to
query. SRJANA closes this gap by treating web-retrieved benchmarks as moment
constraints and calibrating a segment-mixture generator to them, then fitting the
propositions to the synthesized data so the optimizer inherits honest, data-derived
uncertainty. Because literature reports both levels and contrasts, the calibration
matches a mean \emph{and} a ratio; because synthetic data can lie, we gate on
self-consistency before optimizing.

\textbf{A.5 Structure varies.} Affine stages should be solved exactly; convex
stages by NLP; nonconvex ones by an evolutionary method. Rather than ask the user,
the router probes the stage numerically (interpolation for affinity, Jensen-gap
sampling for convexity) and dispatches accordingly---which is why the three
demonstration scenarios land on different solver classes from one code path.

\textbf{A.6 Domain pathologies and equity.} Contextual analysis repeatedly surfaced
domain-specific traps that a naive formulation would miss: Apple Mail Privacy
Protection inflating open rates (so click-to-open is the trustworthy signal);
laggard markets that send \emph{more} but target worse (so the gap analysis is on
the lever mix, not volume); and subsidies that mostly fund non-additional adopters
(so we separate adoption from \emph{additional} installs and let the optimizer
pursue impact, not headline uptake). Each became an explicit modelling choice
rather than an afterthought.

\section{Consolidated Equations}
\label{app:equations}
\textbf{Typed proposition.} Every quantitative relationship is
$\varphi(x,u;\theta)$ with levers $x$, upstream inputs $u$, coefficients
$\theta\sim\Theta$, and a shape prior $s\in\{+,-,\cap,\cup,\text{lin}\}$.

\textbf{Router.} A stage is \emph{affine} if for random $x_1,x_2,\lambda$,
$\|F(\lambda x_1+(1{-}\lambda)x_2)-\lambda F(x_1)-(1{-}\lambda)F(x_2)\|\le\epsilon$;
\emph{convex} if the Jensen gap is $\ge-\epsilon$ for all sampled triples;
otherwise \emph{nonconvex}, routing to exact / NLP / EA respectively.

\textbf{Augmented $\varepsilon$-constraint (AUGMECON2).} For objectives
$f_1,\dots,f_m$ (min-sense),
\begin{equation}
\min_{x}\; f_1(x)-\delta\sum_{k=2}^{m}\frac{s_k}{r_k}\quad
\text{s.t.}\;\; f_k(x)+s_k=e_k,\; s_k\ge0,\; x\in\mathcal X,
\end{equation}
swept over a grid of $e_k$ between each objective's payoff-table ideal and nadir,
with range $r_k$ and a small $\delta$ to exclude weakly-dominated points.

\textbf{ARPF.} Robustness $\rho(x)$ as in Eq.~\ref{eq:rho}; the robust compromise
minimizes a compromise distance to the ideal among solutions with $\rho\ge$ the
frontier median.

\textbf{Compromise programming.} With per-objective ideal $z^\star_k$ and nadir
$z^{\mathrm{nad}}_k$, choose
$\arg\min_x \big(\sum_k w_k\,|f_k(x)-z^\star_k|^p/|z^{\mathrm{nad}}_k-z^\star_k|^p\big)^{1/p}$,
$p{=}2$.

\textbf{SRJANA moment system.} For each rate signal with benchmark mean
$\mu^\star$ and effect ratio $r^\star$ on a binary feature $\ell$, solve for
intercept $\alpha$ and effect-scale $\kappa$,
\begin{equation}
\bar p(\alpha,\kappa)=\mu^\star,\qquad
\frac{\bar p_{\ell=1}(\alpha,\kappa)}{\bar p_{\ell=0}(\alpha,\kappa)}=r^\star ,
\label{eq:srjana-app}
\end{equation}
with $\bar p(\cdot)=\tfrac1{NM}\sum_{c,u}\sigma(\alpha+\kappa\beta_\ell\tilde x_{c,\ell}+L^{-}_{c,u})$
on probabilities (smooth in $(\alpha,\kappa)$), then sample Bernoulli outcomes;
score/cost signals calibrate in closed form. Identifiability holds because the two
moments are monotone in $(\alpha,\kappa)$ over the relevant range.

\section{Agent Invocation via A2A}
\label{app:a2a}
YUKTI is exposed as an \textbf{A2A} (Agent2Agent) agent so other agents can
discover and call it. It advertises an Agent Card at
\texttt{/.well-known/agent-card.json} (protocol \texttt{0.3.0}, transport JSON-RPC
2.0 over HTTP) declaring five skills: \texttt{list\_scenarios},
\texttt{optimize\_scenario}, \texttt{formulate\_and\_optimize} (consumes a
structured \emph{Brief}), \texttt{synthesize\_data} (SRJANA only), and
\texttt{decision\_trace} (segment attribution + shadow prices for the robust
action). A caller
discovers the card, then issues \texttt{message/send}; the engine runs
synchronously and returns a \texttt{Task} in state \texttt{completed} whose
\texttt{artifacts} carry a \texttt{DataPart} with the faithfulness report,
personas, drivers, cross-group gap and the robust compromise. \texttt{tasks/get}
re-fetches by id. A minimal call:

{\footnotesize\begin{verbatim}
POST /  (JSON-RPC 2.0)
{"jsonrpc":"2.0","id":"r1",
 "method":"message/send",
 "params":{"message":{"role":"user",
   "messageId":"m1",
   "parts":[{"kind":"data","data":{
     "skill":"optimize_scenario",
     "input":{"scenario":"screening"}}}]}}}
-> result: {"kind":"task","status":
   {"state":"completed"},
   "artifacts":[{"parts":[{"kind":"data",
     "data":{...robust compromise...}}]}]}
\end{verbatim}}

\noindent Natural-language parsing remains the caller's responsibility; the engine
consumes the structured brief and runs deterministically, which keeps the
LLM-as-elicitor and engine-as-solver boundary clean.

\section{Admissible Action Space (MLR / Consent / Frequency / KAM)}
\label{app:mlr}
The lawful action space restricts the levers $x=(\text{field } f,\text{email
freq } e,\text{depth } d,\text{defense } v)\in[0,1]^4$ by:
\emph{Consent (GDPR).} Email reaches only the opted-in fraction $\kappa$, so the
email contribution enters as $\kappa\,(\text{open}+4\,\text{CTR})\,e$ rather than a
free additive term ($\kappa{=}0.45$; pharma email open $\approx19\%$, CTR
$\approx2.5\%$). \emph{Frequency cap.} $e\le F$ ($F{=}0.6$). \emph{MLR inventory.}
content is bounded by the approved-asset budget, $d+v\le A$ ($A{=}1.2$).
\emph{KAM override.} field on top-decile KOLs is floored, $f\ge \underline f$
($\underline f{=}0.5$), a hard human-in-the-loop invariant. A candidate is
\emph{admissible} iff all hold; the optimizer and ARPF operate only on the
admissible subset. In the demonstration $312/1296$ grid actions are admissible and
the unconstrained surrogate optimum is not among them.

\section{Out-of-Time Backtest Protocol}
\label{app:backtest}
The backtest mimics a real before/after evaluation so it transfers unchanged to
Veeva/IQVIA data. (1) \emph{Split.} Fit the additive surrogate on history up to
$t_0$; freeze it. (2) \emph{Recommend.} Compute the four policies (naive,
admissible-naive, nominal, ARPF-robust) on the frozen surrogate within the
admissible set. (3) \emph{Hold-out evaluation.} Draw the true world (structurally
richer than the surrogate: field/email substitution, frequency fatigue, content
saturation, consent-capped reach) for the post-$t_0$ quarters and score realized
NBRx for each policy and for the \emph{logged} status quo actually run. (4)
\emph{Report.} Mean NBRx uplift vs logged and the $5\%$ lower tail. A real study
substitutes a test--control design: size each arm for a $12$-week NBRx read with the
therapy-area NBRx:TRx ratio, randomize matched HCP cohorts, and compare the robust
recommendation against the incumbent allocation. The simulation is RCT/benchmark
anchored; only the data source changes.

\section{Decision Economics: Revenue-at-Risk}
\label{app:var}
The NBRx-uplift distribution under $\Theta$ maps to money. For a target panel of $N$
high-value oncologists, addressable new patients $m$ per HCP per year, and
incremental annual value $\pi$ per started patient, expected incremental revenue is
$\widehat{\mathbb E}[\Delta\text{rev}]=\mathbb E_\Theta[\Delta\text{NBRx}]\cdot m N\pi$,
and the $5\%$ revenue Value-at-Risk is the corresponding quantile,
$\text{VaR}_{5\%}=Q_{0.05}(\Delta\text{NBRx})\cdot mN\pi$, giving the downside band a
GM can act on (illustrative $N{=}600$, $m{=}8$, $\pi{=}\in45$k). Share-of-voice is
mapped linearly from the field/reach terms. These parameters are explicit
assumptions the affiliate replaces with its own; the \emph{method}---price the whole
robustness distribution, report expected value and downside together---is the point.

\end{document}